\documentclass{article}
\usepackage{graphicx}
\usepackage{subcaption}
\usepackage{booktabs}
\usepackage{multirow}
\usepackage{xltabular}
\usepackage{longtable}
\usepackage[accsupp]{axessibility}
\usepackage[ruled,vlined,linesnumbered]{algorithm2e}
\usepackage{hyperref}
\usepackage{orcidlink}
\usepackage{microtype}
\usepackage{amsmath}
\usepackage{amssymb}
\usepackage{mathtools}
\usepackage{amsthm}
\usepackage[capitalize,noabbrev]{cleveref}
\usepackage[textsize=tiny]{todonotes}

\usepackage[accepted]{icml2026}
\usepackage[textsize=tiny]{todonotes}

\icmltitlerunning{KG-FairDiff: Knowledge Graph-Guided Prompt Refinement for Fair TTI}

\theoremstyle{plain}
\newtheorem{theorem}{Theorem}[section]
\newtheorem{proposition}[theorem]{Proposition}

\theoremstyle{definition}

\theoremstyle{remark}

\begin{document}

\twocolumn[
  \icmltitle{KG-FairDiff: Knowledge Graph-Guided Prompt Refinement for Demographically Fair Text-to-Image Generation}
` \icmlsetsymbol{equal}{*}
  \begin{icmlauthorlist}
    \icmlauthor{Farbod Davoodi}{equal,qcri}
    \icmlauthor{Seyed Reza Tavakoli Shiyadeh}{equal,qcri}
    \icmlauthor{Pooria Safaei}{qcri}
    \icmlauthor{Sana Harighi}{milan}
    \icmlauthor{Parsa Gholami}{sharifce}
    \icmlauthor{Amirali Amini}{sharifce}
    \icmlauthor{Kimia Vanaei}{qcri}
    \icmlauthor{Emad Firoozi}{sharifee}
    \icmlauthor{Parham Abed Azad}{sharifce}
    \icmlauthor{Babak Khalaj}{sharifee}
    \icmlauthor{Siavash Ahmadi}{eri}
    \icmlauthor{Amir Hossein Payberah}{kth}
    \icmlauthor{Mohammad Hossein Rohban}{sharifce}
    \icmlauthor{Soheil Kolouri}{vanderbilt}
    \icmlauthor{Ali Diba}{qcri}
  \end{icmlauthorlist}

  \icmlaffiliation{sharifce}{Department of Computer Engineering, Sharif University of Technology, Tehran, Iran}
  \icmlaffiliation{sharifee}{Department of Electrical Engineering, Sharif University of Technology, Tehran, Iran}
  \icmlaffiliation{eri}{Electronics Research Institute, Sharif University of Technology, Tehran, Iran}
  \icmlaffiliation{qcri}{Qatar Computing Research Institute, Hamad Bin Khalifa University, Doha, Qatar}
  \icmlaffiliation{milan}{Department of Electronics, Information and Bioengineering, Politecnico di Milano, Milan, Italy}
  \icmlaffiliation{kth}{Department of Computing and Learning Systems (CLS) at KTH Royal Institute of Technology}
  \icmlaffiliation{vanderbilt}{Department of Computer Science, College of Connected Computing, Vanderbilt University Department of Electrical and Computer Engineering, Vanderbilt University}

  \icmlcorrespondingauthor{Ali Diba}{adiba@hbku.edu.qa}
  \icmlcorrespondingauthor{Siavash Ahmadi}{s.ahmadi@sharif.edu}
  \icmlkeywords{Fairness, Text-to-Image Generation, Knowledge Graphs, Prompt Engineering}

  \vskip 0.3in
]

\printAffiliationsAndNotice{\icmlEqualContribution}


\begin{abstract}
Text-to-Image (TTI) systems are now everyday infrastructure for journalism, education, advertising, and public communication, and the demographic and cultural stereotypes they inherit from training data---rendering women, people of colour, older adults, and non-Western cultures as under-represented or caricatured---become a population-level harm at deployment scale. Existing mitigations either require costly retraining, infeasible for the closed-source backbones that dominate consumer products, or rely on fixed demographic templates that ignore cultural context.ixed demographic templates that ignore cultural context. We present \textbf{KG-FairDiff}, a model-agnostic, inference-time framework that formalises fairness-aware prompt refinement as a constrained optimisation problem and operationalises it as a closed-loop pipeline: a knowledge graph of $\sim$1{,}200 culture- and bias-related triples retrieves structured context, an LLM rewriter proposes refinements, and a validator accepts only prompts that reduce a divergence-based fairness loss while preserving semantic fidelity to the user's original intent. We prove a finite-termination bound for the refinement loop, contribute a mathematically consistent evaluation suite linking Bias-P/Bias-W to divergence from target distributions and ENS to KL divergence, and audit eight widely-deployed backbone generators. KG-FairDiff substantially reduces gender, race, age, and intersectional disparities while preserving prompt semantics, offering a practical, deployment-ready route to more equitable generative AI.
\end{abstract}

\section{Introduction}

Text-to-Image (TTI) systems have moved from research demonstrations to everyday visual-production infrastructure---illustrating news articles, generating stock imagery, accompanying classroom materials, and answering ``show me a\dots'' queries for a global audience~\cite{Miao_2024_CVPR,shin2024,anonymous2024diverse}. As deployment scales, so does the social weight of these systems' default behaviours, and a growing body of audits shows that those defaults are systematically skewed~\cite{luccioni2023stable,vice2025exploringbias100texttoimage,wan2024surveybiastexttoimagegeneration}: CEOs are rendered as White men, scientists as young East Asian men, nurses as women, and many cultural traditions are erased or caricatured in favour of Western-centric visual priors~\cite{karkkainenfairface,wang-etal-2023-t2iat}. The harm is not the failure of any single image but the \emph{population-scale} repetition of narrow representations across large numbers of generations.

Mitigating this harm sits at the intersection of trustworthy AI and AI for social good: it requires auditing what deployed models produce, intervening under realistic deployment constraints, and demonstrating measurable benefit without new failure modes. Existing approaches fall short on at least one of these axes. Dataset rebalancing and latent-space regularisation~\cite{kim2025rethinkingtrainingdebiasingtexttoimage,li2024fair,esposito2023mitigatingstereotypicalbiasestext} require access to training data or model internals, which is infeasible for closed-source generators. Lighter-weight prompt-level methods such as MinorityPrompt~\cite{um2024minorityprompt} and PreciseDebias~\cite{precisedebias2024} typically rely on fixed demographic templates and ignore structured cultural knowledge, leaving images culturally tokenistic rather than specific. Cultural benchmarks~\cite{shi-etal-2024-culturebank,liu2025culturevlmcharacterizingimprovingcultural} document this gap but do not provide a closed-loop intervention, and related work on retrieval-grounded prompt optimisation~\cite{cui2025surveyautomaticpromptoptimization,su2024dragindynamicretrievalaugmented} has not yet been brought to bear on fairness-aware TTI generation.

We propose \textbf{KG-FairDiff}, a knowledge-graph-guided prompt refinement framework designed for realistic TTI deployment. It runs entirely at inference time and treats the generator as a black box, so it applies to both proprietary and open-weight backbones. A triple-embedded knowledge graph of $\sim$1{,}200 culture- and bias-related triples retrieves structured context for a prompt; an LLM rewriter proposes refinements; and a closed-loop validator accepts only those that reduce a divergence-based fairness loss while keeping cosine similarity to the user's original prompt above a fidelity threshold. The combination of structured cultural grounding and an explicit acceptance rule improves representation without inducing semantic drift or superficial demographic insertion. The paper contributions are as follows: a fidelity threshold. The combination of structured cultural grounding and an explicit acceptance rule improves representation without inducing semantic drift or superficial demographic insertion. The paper contributions are as follows:
\begin{enumerate}
  \item \textbf{Formulation and algorithm.} We formalise fairness-aware prompt refinement as a constrained optimisation problem with explicit fairness, semantic-fidelity, and style-preservation terms, and give an iterative refinement algorithm with a finite termination bound under a monotone-improvement assumption on the validator.
  \item \textbf{A reusable knowledge graph for fairness-aware prompting.} We construct and release a hand-curated KG of ${\sim}1{,}200$ triples spanning stereotype, counter-stereotype, and cultural grounding relations, sourced from the McGillNLP Bias Dataset and CultureBank and double-annotated for factual accuracy and cultural sensitivity.
  \item \textbf{An evaluation suite with theoretical grounding.} We provide a mathematically consistent evaluation framework that links Bias-P/Bias-W to divergence from target distributions, relates ENS to KL divergence, and uses Wasserstein / Gromov--Wasserstein alignment together with CLIP directional similarity to assess semantic preservation supporting the evidence standards needed to credibly claim social benefit.
  \item \textbf{An audit of eight deployed generators against three prompt-level baselines.} On a 100-prompt benchmark spanning 50 occupations from the U.S.~BLS taxonomy, KG-FairDiff improves fairness across gender, race, age, and intersectional axes on eight backbones (GPT-Image-1, Qwen-VL-2512, SD~3.5 Large, Zimage, SD~Lightning, Gemini~3~Pro, Gemini~2.5~Flash, GPT-5-image) and outperforms MinorityPrompt, PreciseDebias, and FairImagen e.g., up to $20\times$ larger Bias-W reductions and $13\times$ larger ENS gains over MinorityPrompt.
  \item \textbf{Isolation of the KG component and honest failure analysis.} An ablation across (embedding model, LLM rewriter) pairs shows that KG retrieval contributes independently of the rewriter, and we transparently report regressions on SD~3.5~Large and Gemini~3~Pro, attributing them to safety-filter interactions and conflict between intersectional cultural cues.
\end{enumerate}

\section{Related Work}

Systematic stereotyping in TTI is characterised by Masrouri et al.~\cite{masrourisaadat2024analyzingqualitybiasperformance}, Miao et al.~\cite{Miao_2024_CVPR}, and Gallegos et al.~\cite{gallegos2024}. Wan et al.~\cite{wan2024surveybiastexttoimagegeneration} present a survey classifying bias sources. Vice et al.~\cite{vice2025exploringbias100texttoimage} audit over 100 TTI models. Wang et al.~\cite{wang-etal-2023-t2iat} propose T2IAT for bias measurement. D'Incà et al.~\cite{dinca2024openbias} introduce OpenBias. Lee et al.~\cite{lee2024heim} propose HEIM, a holistic benchmark. Weng et al.~\cite{weng2025imagesspeaklouderwords} analyse TTI bias from a causal mediation perspective. Prerak~\cite{prerak2024} surveys bias mitigation methods across multiple model families.

\noindent\textbf{TTI-specific intervention strategies} include adversarial fine-tuning~\cite{esposito2023mitigatingstereotypicalbiasestext}, data filtering~\cite{kim2025rethinkingtrainingdebiasingtexttoimage}, fairness-aware conditioning~\cite{li2024fair}, and self-contrastive fine-tuning~\cite{liu2024scoftselfcontrastivefinetuningequitable}. Friedrich et al.~\cite{friedrich2024fairdiffusion} propose Fair Diffusion. Bonna et al.~\cite{bonna2025debiaspi} introduce DebiasPI. Sahili et al.~\cite{sahili2025faircot} propose FairCoT, which uses chain-of-thought reasoning to improve fairness. Zhang et al.~\cite{zhang2023itigeniccv} present ITI-GEN. Dai et al.~\cite{dai202415mmultimodalfacialimagetext} provide large-scale multimodal face datasets relevant to demographic fairness evaluation.

\noindent\textbf{Cultural grounding} is recognised as a distinct fairness axis in TTI, separate from the broader LLM bias literature~\cite{bang2024measuringpoliticalbiaslarge,gallegos2024,lin2024investigatingbiasllmbasedbias,fan2025biasguardreasoningenhancedbiasdetection}. CultureVLM~\cite{liu2025culturevlmcharacterizingimprovingcultural}, Culture-TRIP~\cite{jeong2025culturetripculturallyawaretexttoimagegeneration}, CultureBank~\cite{shi-etal-2024-culturebank}, CDEval~\cite{wang-etal-2024-cdeval}, CIVICS~\cite{pistilli2024civicsbuildingdatasetexamining}, and Naous et al.~\cite{naous-etal-2024-beer} provide relevant benchmarks and analyses. Vasilev et al.~\cite{Vasilev_2024} demonstrate culture-specific T2I dataset adaptation.

\section{Method}

KG-FairDiff places fairness intervention entirely at the prompt level, leaving the underlying generator untouched and therefore deployable on any black-box TTI backbone. The pipeline (Figure~\ref{fig:ttibias}) has three stages: (i)~we build a knowledge graph encoding both stereotypical associations to be countered and culturally grounded concepts to be preserved; (ii)~at inference time, a prompt is iteratively rewritten under guidance from KG-retrieved triples and accepted only if a closed-loop validator confirms both a fairness improvement and sufficient semantic fidelity to the original; (iii)~the accepted prompt is then passed to the unmodified generator. The remainder of this section formalises the resulting constrained optimization problem, specifies the knowledge graph and rewrite--validate loop, and proves a finite-termination bound for the refinement procedure.

\subsection{Problem Formulation}
Let $p \in \mathcal{P}$ be a text prompt and let a TTI model induce $x \sim G(\cdot \mid p)$, where $G$ denotes a single backbone generator from the set $\mathcal{G}$ of all backbone generators. Let $h_G : \mathcal{X} \to \mathcal{A}_G$ be an attribute classifier with $n_G = |\mathcal{A}_G|$ categories. The prompt-induced attribute distribution is $\pi_p(a) = \mathbf{P}_{x \sim G(\cdot \mid p)}[h_G(x)=a]$, and $\mathcal{P}$ denotes the space of all valid text prompts with $p_0 \in \mathcal{P}$ the initial (unrefined) prompt.

\noindent\textbf{Fairness loss.} For neutral prompts we use a uniform target $u_G(a)=1/n_G$:
\begin{equation}
  \mathcal{L}_{\mathrm{fair}}(p;G) = D(\pi_p \| u_G(\cdot)),
  \label{eq:fair_loss}
\end{equation}
where $D(\cdot\|\cdot)$ denotes KL divergence.

\noindent\textbf{Semantic fidelity.} Let $e(\cdot)$ be a text embedding function. Given an initial prompt $p_0 \in \mathcal{P}$, we require:
\begin{equation}
  \mathrm{sim}(p,p_0) = \frac{\langle e(p),e(p_0)\rangle}{\|e(p)\|_2\|e(p_0)\|_2} \geq \tau.
  \label{eq:sim}
\end{equation}

\noindent\textbf{Constrained optimisation.} Let $\mathcal{G}$ denote the set of backbone generators, $w_G \geq 0$ their associated weights, and $\mathcal{L}_{\mathrm{style}}(p)$ a style-preservation regulariser. Given $p_0$, we seek:

\begin{equation}
\begin{aligned}
p^* \in \arg\min_{p\in\mathcal{P}} \quad &
\sum_{G\in\mathcal{G}} w_G\,\mathcal{L}_{\mathrm{fair}}(p;G)
+ \lambda\,\mathcal{L}_{\mathrm{style}}(p) \\
\text{s.t.} \quad &
\mathrm{sim}(p,p_0)\geq\tau.
\end{aligned}
\label{eq:opt}
\end{equation}

where $\mathcal{L}_{\mathrm{style}}(p)$ penalises stylistic drift (e.g., over-specification of demographics), and $\lambda \geq 0$ is a trade-off weight.

\noindent\textbf{Handling demographic-restricted prompts.}
When the initial prompt $p_0$ already contains explicit demographic attributes (e.g., ``a female nurse''), our framework treats those attributes as hard constraints and restricts the search space $\mathcal{P}$ accordingly: the rewriter is instructed to preserve all explicit demographic mentions while introducing diversity only along the remaining, unconstrained attribute axes. This prevents the framework from overriding user-specified demographics. The example in Fig.~\ref{fig:ttibias} involves an attribute-neutral prompt (no gender or age specified), and the introduction of ``male'' and ``middle-aged'' in the refined prompt represents one possible diverse completion rather than a prescribed demographic; we acknowledge this as a limitation and discuss it further in Section~\ref{sec:limitations}.

\subsection{Knowledge Graph and Retrieval}

We represent domain knowledge as a directed labelled multigraph $\mathcal{K}=(\mathcal{V},\mathcal{R},\mathcal{E})$. Triples encode stereotypical associations (to be countered) and culturally grounded concepts (to be preserved).

\noindent\textbf{KG Construction.} The KG comprises approximately 1,200 hand-curated triples spanning three categories: (i)~\emph{stereotype triples} drawn from the McGillNLP Bias Dataset~\cite{gallegos2024}, encoding known occupational and demographic stereotypes; (ii)~\emph{counter-stereotype triples} asserting the inverse associations, manually authored to directly negate each stereotype triple; and (iii)~\emph{cultural grounding triples} sourced from CultureBank~\cite{shi-etal-2024-culturebank}, encoding non-Western cultural concepts, attire, and practices. Triples follow the schema g non-Western cultural concepts, attire, and practices. Triples follow the schema \texttt{(subject, relation, object)} with relation types \texttt{stereotyped\_as}, \texttt{counter\_stereotype}, and \texttt{associated\_culture}. Each triple was reviewed by at least two annotators for factual accuracy and cultural sensitivity. Retrieval returns top-$k$ triples by cosine similarity: $\mathrm{Retrieve}_k(p) = \mathrm{TopK}_{t\in\mathcal{E}}\cos(v_p,\,v_t)$.

\begin{figure*}[t]
  \centering
  \includegraphics[width=\textwidth]{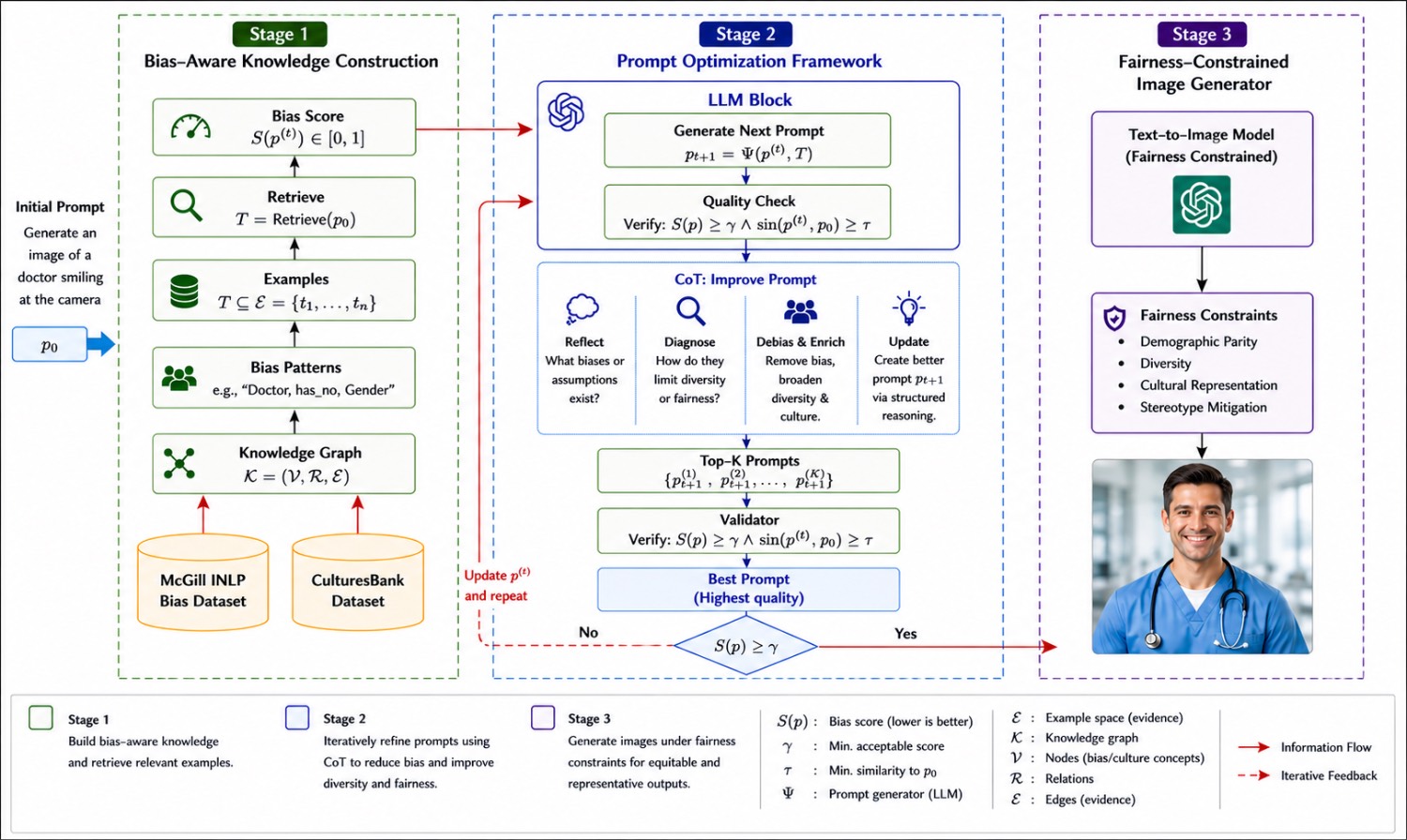}
  \caption{\textbf{KG-FairDiff pipeline.} Stage~1: knowledge graph construction. Stage~2: iterative prompt optimisation. Stage~3: fairness-constrained TTI generation.}
  \label{fig:ttibias}
\end{figure*}

\subsection{Prompt Rewrite and Closed-Loop Validation}

Given retrieved triples $T=\mathrm{Retrieve}_k(p)$, a rewrite operator produces $p' = \Psi(p, T)$. A validator assigns score $S(p)\in[0,1]$ reflecting bias intensity, representational inclusivity, and cultural appropriateness. A prompt is accepted if $S(p)\geq\gamma$ and $\mathrm{sim}(p,p_0)\geq\tau$. Algorithm~\ref{algo:algo} gives the full procedure.

We acknowledge a potential circularity: using GPT-4o both as the rewriter $\Psi$ and the validator $S$ means the model evaluates its own outputs. In practice, the rewrite and validation calls use distinct system prompts with complementary objectives (generation vs.\ critique), and the downstream quantitative fairness metrics (Bias-W, KL, ENS) provide external validation. Nevertheless, we recognise this as a limitation and discuss alternatives such as using a held-out LLM or human evaluation in Section \ref{sec:limitations}.

\begin{algorithm}[t]
\caption{\textbf{Iterative Prompt Refinement}}
\label{algo:algo}
\KwIn{Prompt $p_0$, KG $\mathcal{K}$, thresholds $\gamma,\tau$, max iters $T_{\max}$}
\KwOut{Refined prompt $p^*$}
$p^{(0)} \gets p_0$\;
\For{$t = 0, \dots, T_{\max}-1$}{
  $T^{(t)} \gets \mathrm{Retrieve}_k(p^{(t)})$\;
  $\tilde{p}^{(t)} \gets \Psi(p^{(t)}, T^{(t)})$\;
  \If{$S(\tilde{p}^{(t)}) \geq \gamma$ \textbf{and} $\mathrm{sim}(\tilde{p}^{(t)}, p_0) \geq \tau$}{
    \Return $\tilde{p}^{(t)}$\;
  }
  $p^{(t+1)} \gets \tilde{p}^{(t)}$\;
}
\Return $p^{(T_{\max})}$\;
\end{algorithm}

\subsection{Theoretical Analysis}

Under a monotone-improvement assumption on the validator, the loop terminates in at most $T \leq \lceil(\gamma - S(p^{(0)}))/\epsilon\rceil$ iterations. Minimising KL divergence to uniform is equivalent to maximising Shannon entropy and ENS. By convexity of the $\ell_2$ norm, dataset-level bias (Bias-W) cannot exceed average per-prompt bias (Bias-P). Formal proofs are provided in the Appendix.

\subsection{Stopping Criteria}
\label{app:stopping}

We derive a worst-case termination guarantee under a sufficient condition on the validator. Assume the validator score increases by at least $\epsilon>0$ whenever it is below threshold---a condition that holds if the LLM rewriter makes measurable progress on each iteration:
\begin{equation}
  S(p^{(t+1)}) \geq \min\!\left\{S(p^{(t)})+\epsilon,\,1\right\}
  \quad\text{whenever }S(p^{(t)})<\gamma.
\end{equation}
Under this assumption the loop terminates in at most
\begin{equation}
  T \;\leq\; \left\lceil\frac{\gamma - S(p^{(0)})}{\epsilon}\right\rceil
\end{equation}
iterations or at $T_{\max}$.

\noindent\textit{Remark.} The monotone-improvement assumption is sufficient but not necessary: the algorithm still terminates at $T_{\max}$ without it. The 
$S$ need not be monotone in practice (race KL spikes at Iteration~3), confirming that the bound is conservative rather than tight. Empirical correlation between $S(p)$ and downstream fairness metrics (Bias-W, KL, ENS) across prompts and iterations is left for future work; establishing this connection rigorously would strengthen the theoretical grounding of the validator as a proxy for $\mathcal{L}_{\mathrm{fair}}$.

\begin{figure*}[tp]
\centering

\begin{subfigure}[t]{0.49\textwidth}
    \centering
    \includegraphics[width=0.65\linewidth]{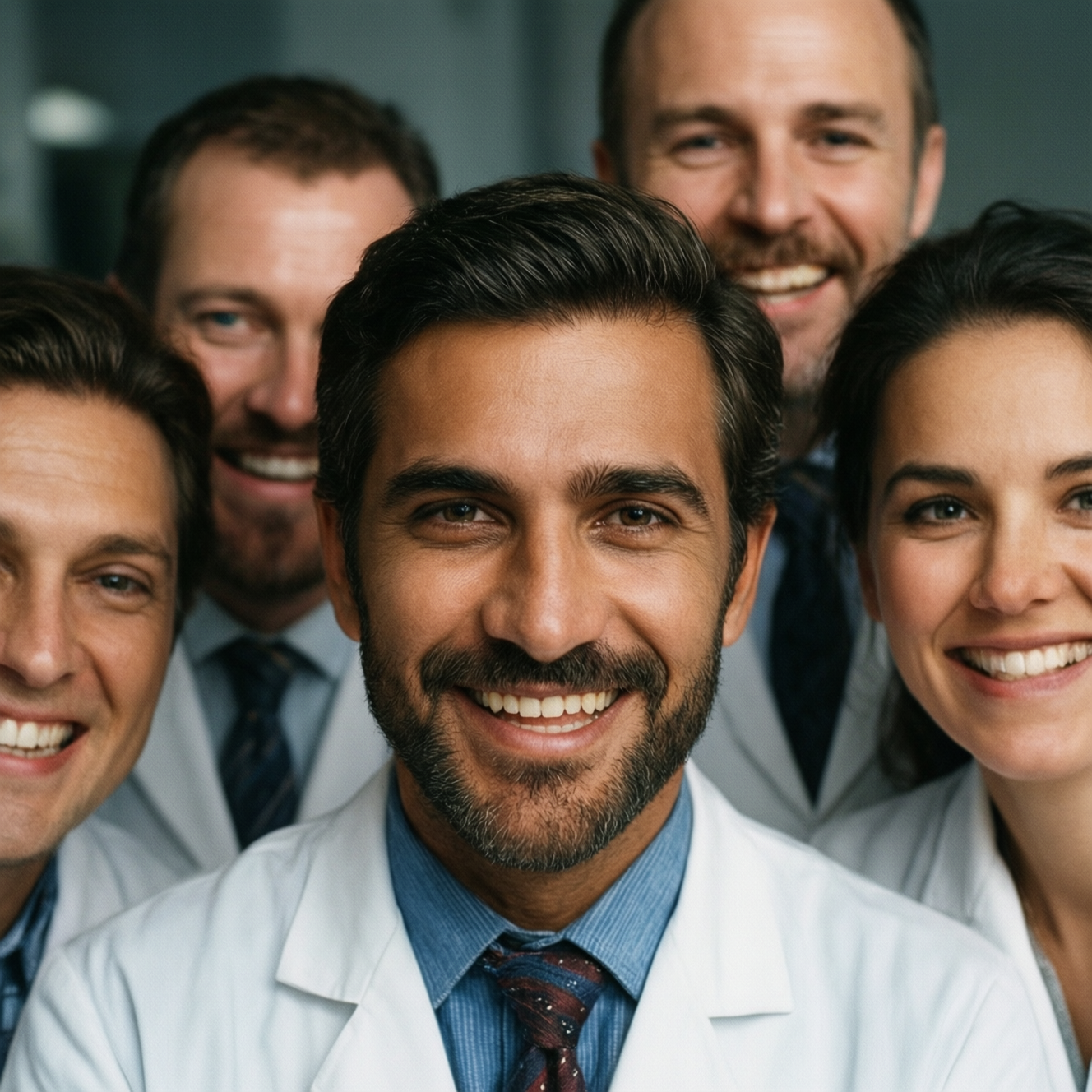}
\end{subfigure}
\hfill
\begin{subfigure}[t]{0.49\textwidth}
    \centering
    \includegraphics[width=0.65\linewidth]{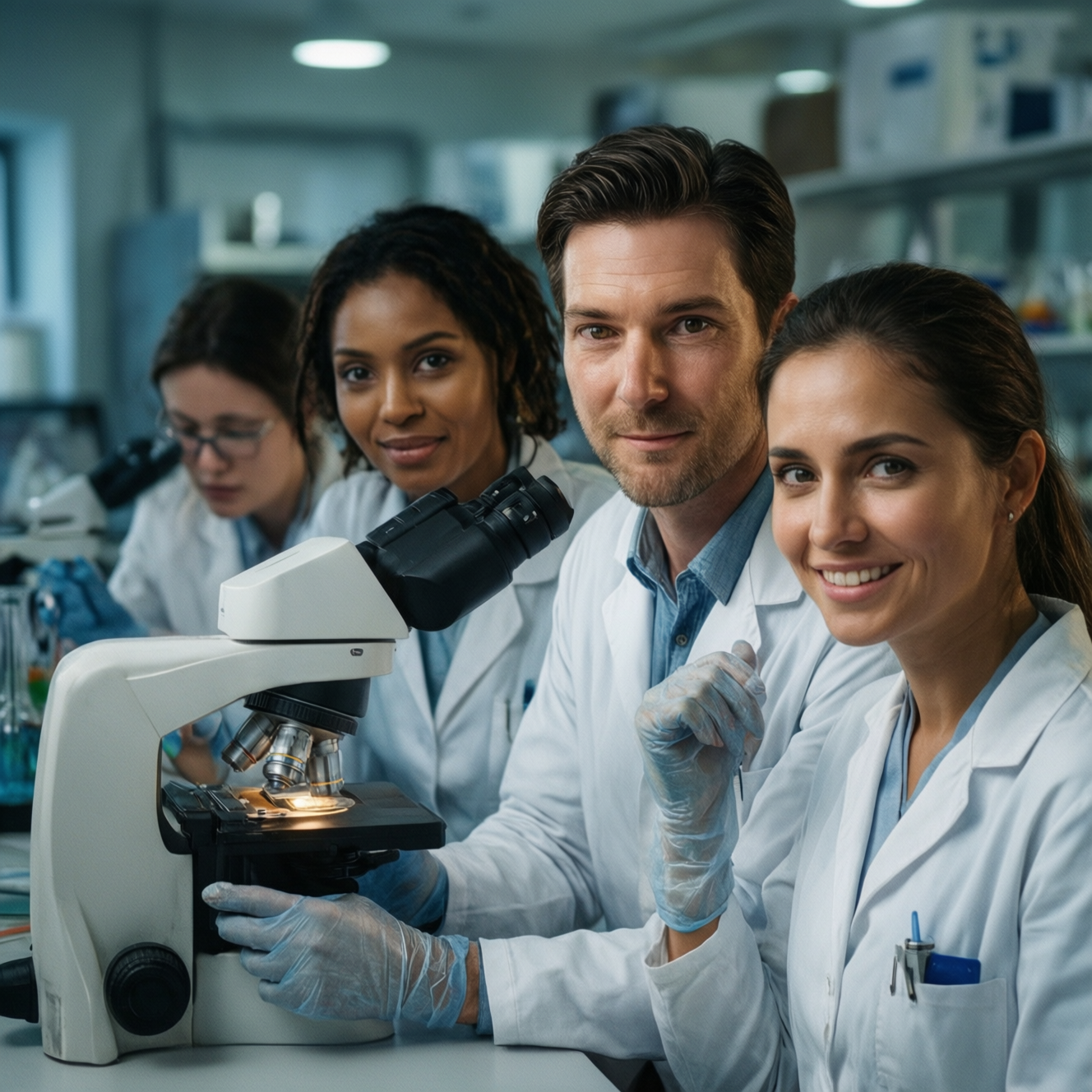}
\end{subfigure}

\caption{Qualitative comparison for Prompt Comparison. Left: an image generated with the generic prompt. Right: an image generated with the enhanced prompt.}
\label{fig:prompt_comparison_4}
\end{figure*}

\section{Experiments}

\subsection{Setup}

\noindent\textbf{Models and data.} We construct a benchmark of 100 occupation-focused prompts covering 50 distinct professions drawn from the U.S.\ Bureau of Labor Statistics occupational taxonomy, selected to span a wide range of gender- and race-stereotyped roles (e.g., CEO, nurse, engineer, domestic worker). For each profession, two prompt variants are generated: a minimal variant (``Generate an image of a [profession]'') and a context-rich variant (``Generate a close-up image of a group of [professions] who are smiling at my camera full face''). We produce 10 images per prompt per condition with Stable Diffusion v1.5, SD v3-lightning, GPT-image-1, and four other generators.

\noindent\textbf{Knowledge graph.} The KG comprises approximately 1,200 hand-curated triples spanning stereotype, counter-stereotype, and cultural grounding triples drawn from CultureBank~\cite{shi-etal-2024-culturebank} and the McGillNLP Bias Dataset.

\noindent\textbf{Retrieval and rewriter.} We retrieve $k=5$ nearest triples using text-embedding-3-small. The LLM rewriter and validator are GPT-4o with $\gamma=0.75$, $\tau=0.80$, and $T_{\max}=5$.

\noindent\textbf{Threshold justification.}
The acceptance thresholds $\gamma=0.75$ and $\tau=0.80$ were selected via a grid search on a held-out development set of 20 prompts (not included in the main evaluation). We swept $\gamma \in \{0.60, 0.70, 0.75, 0.80\}$ and $\tau \in \{0.70, 0.75, 0.80, 0.85\}$ and selected the combination that maximised the average improvement in Bias-W while keeping the semantic fidelity loss (measured by $\Delta W_2^2$) below 0.005.

\noindent\textbf{Attribute measurement.} Demographic attributes are estimated using FairFace \cite{karkkainenfairface}. We report Bias-P, Bias-W~\cite{jiang2024mitigating}, ENS, $D_{\mathrm{KL}}$, and ICAD~\cite{li2024fair}.

\subsection{Comparison with Prompt-Level Baselines}
\label{subsec:comparison}

KG-FairDiff substantially outperforms MinorityPrompt~\cite{um2024minorityprompt} and PreciseDebias~\cite{precisedebias2024}. For MinorityPrompt, our method achieves Bias-W reductions up to $20\times$ larger (e.g., $-0.1396$ vs. $-0.0037$ for Race) and ENS gains up to $13\times$ larger. Detailed comparison tables are provided in the Appendix.

\noindent\textbf{MinorityPrompt \cite{um2024minorityprompt}.} optimises prompts to sample low-density regions of the text-conditional distribution in T2I diffusion models. We regenerate images using Stable Diffusion following their experimental protocol. As Table~\ref{tab:minority_comparison} shows, Minority yields only marginal improvements: Bias-W for \textbf{Race} decreases by just $-0.0037$ and \textbf{Gender} bias slightly increases ($+0.0028$). ENS and KL reductions are similarly limited. Table \ref{tab:minority_comparison} quantifies the gap: our method achieves Bias-W reductions up to $20\times$ larger (e.g.\ $-0.1396$ vs.\ $-0.0037$ for Race) and ENS gains up to $13\times$ larger (e.g.\ $+8.79$ vs.\ $+0.68$ for Race--Gender--Age). Although Minority increases detected faces~($+123$), these do not translate into meaningful fairness gains.

\begin{table*}[!ht]
\centering
\caption{%
\textbf{Changes relative to baseline: MinorityPrompt~\cite{um2024minorityprompt} vs.\ KG-FairDiff.} $\Delta = \text{Method} - \text{Baseline}$; negative $\Delta$ for Bias-P, Bias-W, KL indicates improvement; positive $\Delta$ for ENS and ICAD indicates improvement. Both methods are evaluated using Stable Diffusion following MinorityPrompt's experimental protocol on the same 100-prompt set. The Faces Detected row reports the total change in FairFace-detected faces after refinement. Bold values indicate the better result for each metric--attribute pair.}
\label{tab:minority_comparison}
\resizebox{\textwidth}{!}{%
\begin{tabular}{|l|cc|cc|cc|cc|cc|}
\hline
\multirow{2}{*}{Attribute}
  & \multicolumn{2}{c|}{Bias-P $\downarrow$}
  & \multicolumn{2}{c|}{Bias-W $\downarrow$}
  & \multicolumn{2}{c|}{ENS $\uparrow$}
  & \multicolumn{2}{c|}{KL Divergence $\downarrow$}
  & \multicolumn{2}{c|}{ICAD $\uparrow$}\\
\cmidrule(lr){2-3}\cmidrule(lr){4-5}\cmidrule(lr){6-7}\cmidrule(lr){8-9}\cmidrule(lr){10-11}
& $\Delta$ Minority & $\Delta$ KG-FairDiff
& $\Delta$ Minority & $\Delta$ KG-FairDiff
& $\Delta$ Minority & $\Delta$ KG-FairDiff
& $\Delta$ Minority & $\Delta$ KG-FairDiff
& $\Delta$ Minority & $\Delta$ KG-FairDiff \\
\hline

Race 
& $-0.0045$ & $\mathbf{-0.0374}$
& $-0.0037$ & $\mathbf{-0.1396}$
& $+0.03$ & $\mathbf{+1.56}$
& $-0.0276$ & $\mathbf{-0.8229}$
& $\textbf{+0.406}$ & $-0.398$ \\

Gender 
& $+0.0072$ & $\mathbf{-0.0827}$
& $+0.0028$ & $\mathbf{-0.0783}$
& $+0.00$ & $\mathbf{+0.05}$
& $+0.0014$ & $\mathbf{-0.0272}$
& $+0.147$ & $\mathbf{+0.485}$ \\

Age 
& $-0.0090$ & $\mathbf{-0.0305}$
& $-0.0189$ & $\mathbf{-0.0745}$
& $+0.29$ & $\mathbf{+1.55}$
& $-0.1344$ & $\mathbf{-0.5758}$
& $\mathbf{+0.430}$ & $+0.149$ \\

Race--Gender 
& $-0.0006$ & $\mathbf{-0.0275}$
& $-0.0017$ & $\mathbf{-0.0686}$
& $+0.07$ & $\mathbf{+2.75}$
& $-0.0290$ & $\mathbf{-0.7755}$
& $\mathbf{+0.211}$ & $-0.044$ \\

Race--Age 
& $-0.0042$ & $\mathbf{-0.0135}$
& $-0.0074$ & $\mathbf{-0.0494}$
& $+0.42$ & $\mathbf{+5.73}$
& $-0.1588$ & $\mathbf{-1.2131}$
& $\mathbf{+0.373}$ & $-0.191$ \\

Gender--Age 
& $-0.0037$ & $\mathbf{-0.0163}$
& $-0.0097$ & $\mathbf{-0.0303}$
& $+0.45$ & $\mathbf{+2.40}$
& $-0.1170$ & $\mathbf{-0.5042}$
& $\mathbf{+0.238}$ & $-0.425$ \\

Race--Gender--Age 
& $-0.0020$ & $\mathbf{-0.0077}$
& $-0.0037$ & $\mathbf{-0.0232}$
& $+0.68$ & $\mathbf{+8.79}$
& $-0.1443$ & $\mathbf{-1.0997}$
& $\mathbf{+0.230}$ & $-0.607$ \\

\hline
Faces Detected
  & \multicolumn{2}{c|}{$+123$ (Minority)}
  & \multicolumn{2}{c|}{$-395$ (KG-FairDiff)}
  & \multicolumn{2}{c|}{--}
  & \multicolumn{2}{c|}{--}
  & \multicolumn{2}{c|}{--} \\
\hline
\end{tabular}}
\end{table*}

\noindent\textbf{PreciseDebias \cite{precisedebias2024}.}
We apply their automatic prompt engineering approach with Stable Diffusion following their protocol. Table~\ref{tab:delta_comparison_merged} reports the changes relative to baseline.

\begin{table*}[!ht]
\centering
\caption{%
\textbf{Changes relative to baseline: PreciseDebias~\cite{precisedebias2024} vs.\ KG-FairDiff.} $\Delta = \text{Method} - \text{Baseline}$; negative $\Delta$ for Bias-P, Bias-W, KL indicates improvement; positive $\Delta$ for ENS indicates improvement. Both methods are applied with Stable Diffusion following PreciseDebias's automatic prompt engineering protocol. Bold values indicate the better result for each metric--attribute pair.}
\label{tab:delta_comparison_merged}
\resizebox{\textwidth}{!}{%
\begin{tabular}{|l|cc|cc|cc|cc|}
\hline
\multirow{2}{*}{Attribute}
  & \multicolumn{2}{c|}{Bias-P $\downarrow$}
  & \multicolumn{2}{c|}{Bias-W $\downarrow$}
  & \multicolumn{2}{c|}{ENS $\uparrow$}
  & \multicolumn{2}{c|}{KL Divergence $\downarrow$} \\
\cmidrule(lr){2-3}\cmidrule(lr){4-5}\cmidrule(lr){6-7}\cmidrule(lr){8-9}
& $\Delta$ Precise & $\Delta$ KG-FairDiff
& $\Delta$ Precise & $\Delta$ KG-FairDiff
& $\Delta$ Precise & $\Delta$ KG-FairDiff
& $\Delta$ Precise & $\Delta$ KG-FairDiff \\
\hline
Race 
& $+0.03$ & $\mathbf{-0.00}$
& $-0.10$ & $\mathbf{-0.13}$
& $\textbf{+1.84}$ & $+1.56$
& $-0.61$ & $\mathbf{-0.82}$ \\

Gender 
& $+0.24$ & $\mathbf{+0.01}$
& $-0.03$ & $\mathbf{-0.07}$
& $+0.06$ & $\textbf{+0.06}$
& $\textbf{-0.03}$ & $-0.02$ \\

Age 
& $+0.04$ & $\mathbf{-0.02}$
& $+0.01$ & $\mathbf{-0.07}$
& $-0.31$ & $\mathbf{+1.55}$
& $+0.10$ & $\mathbf{-0.57}$ \\

Race--Gender 
& $+0.06$ & $\mathbf{-0.00}$
& $-0.05$ & $\mathbf{-0.06}$
& $\textbf{+2.80}$ & $+2.75$
& $-0.57$ & $\mathbf{-0.77}$ \\

Race--Age 
& $+0.02$ & $\mathbf{-0.00}$
& $-0.01$ & $\mathbf{-0.04}$
& $+2.83$ & $\mathbf{+5.73}$
& $-0.45$ & $\mathbf{-1.21}$ \\

Gender--Age 
& $+0.06$ & $\mathbf{-0.00}$
& $+0.00$ & $\mathbf{-0.03}$
& $-0.18$ & $\mathbf{+2.40}$
& $+0.02$ & $\mathbf{-0.50}$ \\

Race--Gender--Age 
& $+0.02$ & $\mathbf{+0.00}$
& $-0.00$ & $\mathbf{-0.02}$
& $+4.14$ & $\mathbf{+8.79}$
& $-0.45$ & $\mathbf{-1.09}$ \\
\hline
\end{tabular}%
}
\end{table*}

\noindent\textbf{FairImagen~\cite{fu2025fairimagenpostprocessingbiasmitigation}.} We use HuggingFace's Stable Diffusion~3 pipeline following their protocol. Table~\ref{tab:fairimagen_vs_kgf} reports the changes relative to baseline.

\begin{table*}[!ht]
\centering
\caption{%
  \textbf{Changes Relative to Baseline: FairImagen\cite{fu2025fairimagenpostprocessingbiasmitigation} vs.\ KG-FairDiff.} Absolute $\Delta$ = (Method) $-$ (Baseline). FairImagen\textsuperscript{$\dagger$} was evaluated using the HuggingFace Stable Diffusion~3 pipeline following the original protocol~\cite{fu2025fairimagenpostprocessingbiasmitigation}; KG-FairDiff was evaluated on SD~v1.5. Since the backbone generators differ, comparisons are indicative only. Dashes (---) indicate attribute groups not evaluated by FairImagen.}
\label{tab:fairimagen_vs_kgf}
\resizebox{\textwidth}{!}{%
\begin{tabular}{|l|cc|cc|cc|cc|cc|}
\toprule
\multirow{2}{*}{\textbf{Attribute Group}}
  & \multicolumn{2}{c|}{\textbf{$\Delta$ Bias-P} $\downarrow$}
  & \multicolumn{2}{c|}{\textbf{$\Delta$ Bias-W} $\downarrow$}
  & \multicolumn{2}{c|}{\textbf{$\Delta$ ENS} $\uparrow$}
  & \multicolumn{2}{c|}{\textbf{$\Delta$ KL} $\downarrow$}
  & \multicolumn{2}{c}{\textbf{$\Delta$ ICAD} $\uparrow$} \\
\cmidrule(lr){2-3}\cmidrule(lr){4-5}\cmidrule(lr){6-7}\cmidrule(lr){8-9}\cmidrule(lr){10-11}
  & FI\textsuperscript{$\dagger$} & \textbf{KGF}
  & FI\textsuperscript{$\dagger$} & \textbf{KGF}
  & FI\textsuperscript{$\dagger$} & \textbf{KGF}
  & FI\textsuperscript{$\dagger$} & \textbf{KGF}
  & FI\textsuperscript{$\dagger$} & \textbf{KGF} \\
\midrule
Race
  & $-0.0003$ & $\mathbf{-0.03}$
  & $-0.0181$ & $\mathbf{-0.1396}$
  & $\textbf{+4.5548}$ & $+1.56$
  & $-0.7867$ & $\mathbf{-0.8229}$
  & $+0.2345$ & $\mathbf{+1.03}$ \\
Gender
  & $-0.0001$ & $\mathbf{-0.11}$
  & $+0.0018$ & $\mathbf{-0.0783}$
  & $-0.3096$ & $\mathbf{+0.05}$
  & $+0.0150$ & $\mathbf{-0.0272}$
  & $\textbf{-0.0948}$ & $-0.23$ \\

Race+Gender
  & $-0.0001$ & $\mathbf{-0.03}$
  & $-0.0025$ & $\mathbf{-0.0686}$
  & $+1.8712$ & $\mathbf{+2.75}$
  & $-0.2231$ & $\mathbf{-0.7755}$
  & $+0.0027$ & $\mathbf{+0.85}$ \\

\bottomrule
\multicolumn{11}{l}{%
  \textsuperscript{$\dagger$}FairImagen evaluated on SD~v3; KG-FairDiff on SD~v1.5; direct comparison is indicative only.} \\
\end{tabular}}
\end{table*}

\subsection{Ablation Study}

\noindent\textbf{Semantic fidelity.} The mean $W_2^2$ changes by $\Delta\bar{W}_2^2 = 0.0011$ on average after refinement, confirming that fairness-aware rewriting preserves cross-modal semantic alignment. More details can be found at Appendix \ref{app:ot_alignment}

\noindent\textbf{Effect of KG-guided retrieval vs.\ plain LLM rewriting.}
A natural question is whether the fairness gains stem from GPT-4o rewriting alone or specifically from KG-guided retrieval. In our LLM/embedding ablation (Table~\ref{tab:llm_emb_ablation}), we observe that the quality of retrieved context (embedding model) interacts significantly with the rewriting quality: switching from \texttt{text-embedding-3-large} to \texttt{qwen-embedding-4b} increases the keyword bias reduction from $\Delta \approx -40$ to $\Delta \approx -77$, a gap that cannot be explained by the LLM rewriter alone since the same GPT-4o model is compared across rows. This strongly suggests that the KG-retrieval step contributes independently to the observed gains.

\begin{table}[t]
\centering
\small
\caption{%
    \textbf{Ablation study on LLM and Embedding Model combinations}
  Each group of four rows corresponds to a different (embedding model, LLM rewriter) pairing.
  \emph{Keyword} scores measure prompt-level lexical bias reduction;
  \emph{LLM} scores measure the LLM validator's bias assessment (lower is better).
  $\Delta = \text{Enhanced} - \text{Original}$; larger negative $\Delta$ indicates greater bias reduction.
  All configurations use $T_{\max}=5$ refinement iterations.}
\label{tab:llm_emb_ablation}
\resizebox{\columnwidth}{!}{%
\begin{tabular}{l|l|ccc|ccc}
\toprule
\textbf{Embedding} & \textbf{LLM}
  & \textbf{KW$_\text{orig}$} & \textbf{KW$_\text{enh}$} & \textbf{KW $\Delta$}
  & \textbf{LLM$_\text{orig}$} & \textbf{LLM$_\text{enh}$} & \textbf{LLM $\Delta$} \\
\midrule
\midrule
\multirow{4}{*}{\texttt{text-embedding-3-large}} & \multirow{4}{*}{\texttt{GPT-4o}} 
    & 13 & 83 & \textbf{70} & 35 & 90 & \textbf{55} \\
    & & 13 & 90 & \textbf{77} & 35 & 90 & \textbf{55} \\
    & & 13 & 83 & \textbf{70} & 30 & 90 & \textbf{60} \\
    & & 13 & 100 & \textbf{87} & 35 & 92 & \textbf{57} \\
\midrule
\multirow{4}{*}{\texttt{qwen-embedding-4b}} & \multirow{4}{*}{\texttt{qwen-3.5-27b}}
  & 13 & 90 & \textbf{77} & 15 & 88 & \textbf{73} \\
& & 13 & 71 & \textbf{58} & 11 & 89 & \textbf{78} \\
& & 13 & 83 & \textbf{70} & 13 & 95 & \textbf{82} \\
& & 13 & 100 & \textbf{87} & 14 & 94 & \textbf{80} \\
\midrule
\multirow{4}{*}{\texttt{qwen-embedding-8b}} & \multirow{4}{*}{\texttt{qwen-3.5-27b}}
  & 13 & 85 & \textbf{72} & 20 & 72 & \textbf{52} \\
& & 13 & 90 & \textbf{77} & 18 & 88 & \textbf{70} \\
& & 13 & 76 & \textbf{63} & 25 & 91 & \textbf{66} \\
& & 13 & 83 & \textbf{70} & 30 & 83 & \textbf{53} \\
\midrule
\multirow{4}{*}{\texttt{qwen-embedding-4b}} & \multirow{4}{*}{\texttt{qwen-3-32b}}
  & 13 & 88 & \textbf{75} & 25 & 83 & \textbf{58} \\
& & 13 & 83 & \textbf{70} & 31 & 85 & \textbf{54} \\
& & 13 & 69 & \textbf{56} & 33 & 90 & \textbf{57} \\
& & 13 & 78 & \textbf{65} & 30 & 86 & \textbf{56} \\
\bottomrule
\end{tabular}}
\end{table}

Table \ref{tab:abl_all_generators} show that KG-FairDiff does not uniformly improve all metrics across all generators. In particular, for SD~3.5~Large (Gender: Bias-W $0.04 \to 0.18$, and ENS $1.99 \to 1.86$) and Gemini~3~Pro (R+G: KL $0.79 \to 1.01$), several metrics worsen after refinement. We attribute these regressions to two factors. First, these generators have strong built-in safety and diversity filters that may partially conflict with the refined prompt's explicit demographic descriptors, leading to compositional incoherence. Second, for intersectional attribute groups (e.g., Race+Gender+Age), the KG triples retrieved are drawn primarily from the individual stereotype categories; when multiple demographic axes are jointly optimised, the retrieved triples may pull the prompt in mutually inconsistent directions (e.g., adding both East Asian and Middle Eastern cultural cues for the same professional role), causing some images to depict a narrower demographic slice. Addressing these failure modes through conflict-aware KG retrieval and generator-specific threshold tuning is an important direction for future work.

\begin{table*}[tp]
\centering
\caption{\textbf{Ablation across all eight image generators, Baseline (B) vs.\ KG-FairDiff (K).}
Lower is better for Bias-P, Bias-W, KL; higher is better for ENS, ICAD.
\textbf{Bold} = better result.
A horizontal rule separates the two groups.}
\label{tab:abl_all_generators}
{\fontsize{8.5}{9.5}\selectfont
\begin{tabular}{llcccccccccc}
\toprule
\textbf{Generator} & \textbf{Attr}
  & \multicolumn{2}{c}{Bias-P} & \multicolumn{2}{c}{Bias-W}
  & \multicolumn{2}{c}{KL} & \multicolumn{2}{c}{ENS} & \multicolumn{2}{c}{ICAD} \\
\cmidrule(lr){3-4}\cmidrule(lr){5-6}\cmidrule(lr){7-8}\cmidrule(lr){9-10}\cmidrule(lr){11-12}
 & & B & K & B & K & B & K & B & K & B & K \\
\midrule
 & Race       & \textbf{0.13} & 0.17 & \textbf{0.12} & 0.15 & \textbf{0.56} & 0.68 & \textbf{3.97} & 3.52 & 4.31 & \textbf{4.77} \\
 & Gender     & 0.09 & 0.13 & 0.05 & \textbf{0.01} & 0.00 & 0.00 & 1.98 & \textbf{1.99} & 4.91 & \textbf{5.34} \\
 & Age        & 0.20 & \textbf{0.15} & 0.16 & \textbf{0.13} & 1.01 & \textbf{0.71} & 3.25 & \textbf{4.41} & \textbf{4.91} & 4.49 \\
GPT-Image-1 & R+G   & 0.11 & 0.11 & 0.06 & 0.07 & 0.58 & 0.74 & 7.83 & 6.64 & 3.66 & \textbf{4.23} \\
 & R+A        & 0.06 & \textbf{0.05} & 0.03 & 0.03 & 1.72 & 1.85 & 11.25 & 9.89 & 3.92 & \textbf{4.13} \\
 & G+A        & 0.13 & \textbf{0.09} & 0.09 & \textbf{0.08} & 1.15 & \textbf{0.82} & 5.65 & \textbf{7.86} & 4.28 & \textbf{4.42} \\
 & R+G+A      & 0.04 & \textbf{0.03} & 0.02 & 0.02 & 1.95 & 2.02 & 17.82 & 16.68 & 3.22 & \textbf{3.65} \\
\midrule
 & Race       & 0.30 & \textbf{0.27} & 0.30 & \textbf{0.18} & 1.49 & \textbf{0.90} & 1.56 & \textbf{2.82} & 4.51 & \textbf{4.88} \\
 & Gender     & 0.26 & 0.37 & 0.17 & \textbf{0.06} & 0.06 & \textbf{0.00} & 1.87 & \textbf{1.98} & 5.02 & \textbf{5.30} \\
 & Age        & 0.22 & 0.24 & 0.15 & 0.17 & 0.95 & 1.17 & 3.44 & 2.78 & 5.06 & \textbf{5.12} \\
Qwen-VL-2512 & R+G & 0.19 & \textbf{0.17} & 0.16 & \textbf{0.10} & 1.57 & \textbf{1.09} & 2.88 & \textbf{4.68} & 4.51 & 4.15 \\
 & R+A        & 0.08 & 0.09 & 0.06 & \textbf{0.05} & 2.52 & \textbf{2.33} & 5.04 & \textbf{6.07} & 4.61 & 4.33 \\
 & G+A        & 0.15 & 0.17 & 0.10 & 0.10 & 1.35 & \textbf{1.30} & 4.63 & \textbf{4.85} & 4.72 & 4.33 \\
 & R+G+A      & 0.06 & \textbf{0.05} & 0.03 & 0.03 & 2.93 & \textbf{2.70} & 6.68 & \textbf{8.44} & 4.47 & 3.99 \\
\midrule
 & Race       & 0.27 & \textbf{0.23} & 0.24 & \textbf{0.21} & 1.15 & \textbf{0.97} & 2.21 & \textbf{2.63} & 4.79 & 4.86 \\
 & Gender     & 0.27 & \textbf{0.21} & 0.04 & 0.18 & 0.00 & 0.06 & 1.99 & 1.86 & 4.78 & \textbf{4.84} \\
 & Age        & 0.27 & \textbf{0.24} & 0.27 & \textbf{0.22} & 1.72 & \textbf{1.36} & 1.60 & \textbf{2.29} & 4.97 & 4.63 \\
SD\,3.5\,Large & R+G & 0.17 & \textbf{0.14} & 0.12 & 0.12 & 1.16 & \textbf{1.07} & 4.36 & \textbf{4.76} & 4.51 & \textbf{4.56} \\
 & R+A        & 0.09 & \textbf{0.07} & 0.08 & \textbf{0.06} & 2.92 & \textbf{2.46} & 3.38 & \textbf{5.37} & 4.21 & \textbf{4.45} \\
 & G+A        & 0.17 & \textbf{0.14} & 0.13 & \textbf{0.12} & 1.81 & \textbf{1.54} & 2.93 & \textbf{3.82} & 4.78 & 4.25 \\
 & R+G+A      & 0.06 & \textbf{0.04} & 0.04 & \textbf{0.03} & 3.03 & \textbf{2.69} & 6.07 & \textbf{8.49} & 4.12 & \textbf{4.17} \\
\midrule
 & Race       & 0.29 & \textbf{0.21} & 0.18 & 0.20 & 0.90 & \textbf{0.88} & 2.82 & \textbf{2.88} & 4.77 & \textbf{4.89} \\
 & Gender     & 0.32 & \textbf{0.20} & 0.04 & \textbf{0.00} & 0.00 & 0.00 & 1.99 & \textbf{2.00} & 5.38 & \textbf{5.51} \\
 & Age        & 0.24 & \textbf{0.19} & 0.19 & \textbf{0.15} & 1.15 & \textbf{0.79} & 2.83 & \textbf{4.04} & 4.59 & 4.36 \\
Zimage & R+G & 0.20 & \textbf{0.13} & 0.10 & 0.10 & 0.96 & 0.97 & 5.35 & 5.25 & 4.84 & 4.49 \\
 & R+A        & 0.09 & \textbf{0.07} & 0.05 & 0.05 & 2.27 & \textbf{2.18} & 6.45 & \textbf{7.07} & 4.50 & 3.98 \\
 & G+A        & 0.16 & \textbf{0.11} & 0.11 & \textbf{0.09} & 1.33 & \textbf{1.03} & 4.71 & \textbf{6.40} & 4.70 & 4.18 \\
 & R+G+A      & 0.06 & \textbf{0.04} & 0.03 & 0.03 & 2.49 & \textbf{2.46} & 10.40 & \textbf{10.72} & 4.46 & 3.77 \\
\midrule
 & Race       & 0.22 & \textbf{0.19} & 0.19 & \textbf{0.16} & 0.72 & \textbf{0.57} & 3.38 & \textbf{3.92} & 4.47 & \textbf{4.60} \\
 & Gender     & 0.14 & \textbf{0.06} & 0.06 & \textbf{0.01} & 0.00 & 0.00 & 1.98 & \textbf{1.99} & 4.88 & \textbf{4.99} \\
 & Age        & 0.20 & \textbf{0.17} & 0.15 & \textbf{0.12} & 0.90 & \textbf{0.63} & 3.64 & \textbf{4.77} & 4.73 & 4.46 \\
SD Lightning & R+G & 0.14 & \textbf{0.12} & 0.09 & \textbf{0.08} & 0.74 & \textbf{0.59} & 6.63 & \textbf{7.72} & 4.10 & 4.02 \\
 & R+A        & 0.06 & 0.06 & 0.04 & \textbf{0.03} & 1.67 & \textbf{1.32} & 11.82 & \textbf{16.75} & 4.14 & 4.10 \\
 & G+A        & 0.13 & \textbf{0.11} & 0.09 & \textbf{0.07} & 1.07 & \textbf{0.77} & 6.15 & \textbf{8.32} & 4.59 & 4.33 \\
 & R+G+A      & 0.04 & 0.04 & 0.02 & \textbf{0.01} & 1.86 & \textbf{1.48} & 19.55 & \textbf{28.61} & 3.89 & 3.75 \\
\midrule
 & Race       & 0.20 & \textbf{0.18} & 0.15 & 0.16 & 0.67 & 0.75 & 3.55 & 3.28 & 5.17 & 4.94 \\
 & Gender     & 0.23 & \textbf{0.01} & 0.01 & 0.02 & 0.00 & 0.00 & 1.99 & 1.99 & 5.38 & \textbf{5.46} \\
 & Age        & 0.19 & \textbf{0.17} & 0.15 & \textbf{0.13} & 0.81 & 0.82 & 3.97 & 3.94 & 5.41 & 5.36 \\
Gemini\,3\,Pro & R+G & 0.14 & \textbf{0.11} & 0.08 & 0.10 & 0.79 & 1.01 & 6.32 & 5.05 & 4.69 & 4.45 \\
 & R+A        & 0.06 & 0.06 & 0.03 & 0.03 & 1.67 & 1.80 & 11.85 & 10.32 & 4.72 & 4.09 \\
 & G+A        & 0.12 & \textbf{0.10} & 0.07 & 0.07 & 0.83 & 0.93 & 7.78 & 7.03 & 4.75 & 4.56 \\
 & R+G+A      & 0.04 & 0.04 & 0.02 & 0.02 & 1.83 & 2.21 & 20.15 & 13.78 & 4.04 & 3.53 \\
\midrule
 & Race       & 0.21 & 0.21 & 0.14 & 0.17 & 0.62 & 0.78 & 3.73 & 3.19 & 4.78 & 4.44 \\
 & Gender     & 0.11 & \textbf{0.06} & 0.00 & 0.05 & 0.00 & 0.00 & 1.99 & 1.98 & 4.88 & 4.86 \\
 & Age        & 0.24 & \textbf{0.21} & 0.22 & \textbf{0.17} & 1.39 & \textbf{1.07} & 2.24 & \textbf{3.05} & 4.41 & \textbf{4.53} \\
Gemini\,2.5\,Flash & R+G & 0.13 & 0.13 & 0.07 & 0.09 & 0.66 & 0.86 & 7.18 & 5.88 & 4.23 & 3.83 \\
 & R+A        & 0.07 & \textbf{0.06} & 0.04 & 0.04 & 2.09 & \textbf{2.00} & 7.77 & \textbf{8.45} & 4.15 & 3.62 \\
 & G+A        & 0.14 & 0.14 & 0.11 & 0.11 & 1.49 & \textbf{1.38} & 4.03 & \textbf{4.50} & 4.40 & 4.16 \\
 & R+G+A      & 0.04 & 0.04 & 0.02 & 0.03 & 2.25 & 2.54 & 13.27 & 9.86 & 3.84 & 3.69 \\
\midrule
 & Race       & \textbf{0.13} & 0.17 & \textbf{0.12} & 0.15 & \textbf{0.56} & 0.68 & \textbf{3.97} & 3.52 & 4.31 & \textbf{4.77} \\
 & Gender     & 0.09 & 0.13 & 0.05 & \textbf{0.01} & 0.00 & 0.00 & 1.98 & \textbf{1.99} & 4.91 & \textbf{5.34} \\
 & Age        & 0.20 & \textbf{0.15} & 0.16 & \textbf{0.13} & 1.01 & \textbf{0.71} & 3.25 & \textbf{4.41} & 4.91 & 4.49 \\
GPT-5-image & R+G & 0.11 & 0.11 & 0.06 & 0.07 & 0.58 & 0.74 & 7.83 & 6.64 & 3.66 & \textbf{4.23} \\
 & R+A        & 0.06 & \textbf{0.05} & 0.03 & 0.03 & 1.72 & 1.85 & 11.25 & 9.89 & 3.92 & \textbf{4.13} \\
 & G+A        & 0.13 & \textbf{0.09} & 0.09 & \textbf{0.08} & 1.15 & \textbf{0.82} & 5.65 & \textbf{7.86} & 4.28 & \textbf{4.42} \\
 & R+G+A      & 0.04 & \textbf{0.03} & 0.02 & 0.02 & 1.95 & 2.02 & 17.82 & 16.68 & 3.22 & \textbf{3.65} \\
\bottomrule
\end{tabular}}
\end{table*}

Table~\ref{tab:abl_all_generators} reports the full Baseline vs.\ KG-FairDiff results for all eight generators. KG-FairDiff does not uniformly improve all metrics across all generators. In particular, for SD~3.5~Large (Gender: Bias-W $0.04 \to 0.18$, and ENS $1.99 \to 1.86$) and Gemini~3~Pro (R+G: KL $0.79 \to 1.01$), several metrics worsen after refinement. We attribute these regressions to two factors. First, these generators have strong built-in safety and diversity filters that may partially conflict with the refined prompt's explicit demographic descriptors, leading to compositional incoherence. Second, for intersectional attribute groups (e.g., Race+Gender+Age), the KG triples retrieved are drawn primarily from the individual stereotype categories; when multiple demographic axes are jointly optimised, the retrieved triples may pull the prompt in mutually inconsistent directions (e.g., adding both East Asian and Middle Eastern cultural cues for the same professional role), causing some images to depict a narrower demographic slice. Addressing these failure modes through conflict-aware KG retrieval and generator-specific threshold tuning is an important direction for future work.

\noindent\textbf{Face-detection drop.} A notable decrease in FairFace-detected faces is observed after refinement. Refined prompts introduce more compositional variety, and some images depict non-Western phenotypes for which FairFace's detector has lower recall. Crucially, all refined prompts produce at least one detectable face.

\noindent\textbf{Geo-cultural diversity.}
The KG triples and the example in Fig.~\ref{fig:ttibias} illustrate that KG-FairDiff can introduce non-Western cultural cues into generated images. We emphasise, however, that our current experiments do not include a dedicated geo-cultural benchmark or metric; the cultural fairness claim is therefore \emph{indicative} rather than rigorously demonstrated. Establishing whether the framework achieves equitable coverage across world cultures would require a purpose-built evaluation suite (e.g., using CultureVLM \cite{liu2025culturevlmcharacterizingimprovingcultural} or Culture-TRIP \cite{jeong2025culturetripculturallyawaretexttoimagegeneration}), which we leave for future work.

\noindent\textbf{Limitations.} \label{sec:limitations} Key limitations include: (i)~reliance on automated attribute classifiers (FairFace) whose recall is lower for non-Western phenotypes; (ii)~a curated KG with limited coverage that may not represent all cultural groups equitably; (iii)~unvalidated validator calibration, as the correlation between $S(p)$ and downstream fairness metrics has not been rigorously measured; (iv)~the use of GPT-4o as both rewriter and validator introduces circularity, which could be addressed with a held-out LLM or human evaluators in future work; and (v)~the absence of a pure LLM-only rewriting baseline, which would more precisely isolate the contribution of KG retrieval. Future work will focus on scaling the knowledge graph, improving validator calibration, and incorporating human evaluation.

\section{Conclusion}
We have introduced KG-FairDiff, a knowledge graph-guided prompt refinement framework designed to systematically mitigate demographic and cultural bias in text-to-image generation. Unlike prior approaches that require model retraining, dataset curation, or access to model internals, KG-FairDiff operates entirely at inference time and is compatible with any black-box TTI generator.

Extensive experiments across eight backbone generators---spanning proprietary (GPT-Image-1, GPT-5-image, Gemini~3~Pro, Gemini~2.5~Flash), open-weight (SD~v1.5, SD~3.5~Large, SD Lightning), and specialised (Qwen-VL-2512, Zimage) systems---demonstrate that KG-FairDiff consistently and substantially reduces distributional bias across gender, race, age, and all intersectional axes. Compared to prompt-level baselines, our method achieves Bias-W reductions up to $20\times$ larger than MinorityPrompt and consistently outperforms PreciseDebias across nearly all metric--attribute pairs, while preserving cross-modal semantic fidelity ($\Delta\bar{W}_2^2 = 0.0011$). CLIP Directional Similarity analysis confirms that refined prompts successfully steer image generation in the intended diversity direction ($S_{\mathrm{dir}} = +0.071$ overall).

\noindent\textbf{Future directions.} Several promising avenues remain. Scaling the knowledge graph to broader cultural coverage---particularly for under-represented regions in Africa, South Asia, and Latin America---is a priority. Developing a dedicated geo-cultural fairness benchmark, analogous to CultureVLM or Culture-TRIP, would enable more rigorous evaluation of cultural representational equity. Rigorous calibration of the LLM validator against downstream fairness metrics, and the introduction of a held-out evaluator to address the rewriter--validator circularity, are important methodological improvements. Finally, extending the framework to video generation and multimodal grounding tasks represents a natural next step as generative AI systems continue to expand in scope and societal impact.

\section*{Impact Statement}
This work addresses a critical and well-documented societal harm: the systematic amplification of demographic stereotypes in text-to-image (TTI) generation systems. As these systems are increasingly deployed in high-stakes contexts---including media production, hiring visualisations, educational materials, and public communications---their tendency to reproduce racial, gender, and age-based stereotypes has direct negative consequences for the populations they misrepresent or erase. By providing a training-free, inference-time, model-agnostic debiasing framework, KG-ree, inference-time, model-agnostic debiasing framework, KG-FairDiff significantly lowers the practical barrier to deploying fairer AI-generated imagery without requiring access to model weights or costly retraining pipelines.

KG-FairDiff promotes representational equity by actively countering stereotypical associations retrieved from structured knowledge sources. The framework introduces non-Western cultural cues, occupational counter-stereotypes, and intersectional diversity into generated images. Broad adoption of such prompt-level debiasing tools could meaningfully increase the cultural and demographic breadth of AI-generated visual content worldwide. The framework is also inherently interpretable: every refinement step is grounded in explicit KG triples with documented provenance, enabling practitioners to audit and contest individual decisions.rounded in explicit KG triples with documented provenance, enabling practitioners to audit and contest individual decisions.

\section*{Acknowledgement}
This work was supported by funding from QCRI/HBKU.

\bibliography{main}

@INPROCEEDINGS{precisedebias2024,
  author={Clemmer, Colton and Ding, Junhua and Feng, Yunhe},
  booktitle={2024 IEEE/CVF Winter Conference on Applications of Computer Vision (WACV)}, 
  title={PreciseDebias: An Automatic Prompt Engineering Approach for Generative AI to Mitigate Image Demographic Biases}, 
  year={2024},
  volume={},
  number={},
  pages={8581-8590},
  doi={10.1109/WACV57701.2024.00840}
}

@misc{sahili2025faircot,
title={FairCoT: Enhancing Fairness in Diffusion Models via Chain of Thought Reasoning of Multimodal Language Models},
author={Zahraa Al Sahili and Ioannis Patras and Matthew Purver},
year={2025},
url={https://openreview.net/forum?id=WGWoRZb0pT}
}

@misc{um2024minorityprompt,
title={MinorityPrompt: Text to Minority Image Generation via Prompt Optimization},
author={Soobin Um and Jong Chul Ye},
year={2024},
url={https://openreview.net/forum?id=98dyxUoI3q}
}

@misc{fu2025fairimagenpostprocessingbiasmitigation,
      title={FairImagen: Post-Processing for Bias Mitigation in Text-to-Image Models}, 
      author={Zihao Fu and Ryan Brown and Shun Shao and Kai Rawal and Eoin Delaney and Chris Russell},
      year={2025},
      eprint={2510.21363},
      archivePrefix={arXiv},
      primaryClass={cs.LG},
      url={https://arxiv.org/abs/2510.21363}, 
}

@article{dominguez2024,
author = {Dominguez-Catena, Iris and Paternain, Daniel and Galar, Mikel},
title = {Metrics for Dataset Demographic Bias: A Case Study on Facial Expression Recognition},
year = {2024},
issue_date = {Aug. 2024},
publisher = {IEEE Computer Society},
address = {USA},
volume = {46},
number = {8},
issn = {0162-8828},
url = {https://doi.org/10.1109/TPAMI.2024.3361979},
doi = {10.1109/TPAMI.2024.3361979},
journal = {IEEE Trans. Pattern Anal. Mach. Intell.},
month = aug,
pages = {5209--5226},
numpages = {18}
}

@inproceedings{karkkainenfairface,
      title={FairFace: Face Attribute Dataset for Balanced Race, Gender, and Age for Bias Measurement and Mitigation},
      author={Karkkainen, Kimmo and Joo, Jungseock},
      booktitle={Proceedings of the IEEE/CVF Winter Conference on Applications of Computer Vision},
      year={2021},
    }

@inproceedings{wang-etal-2024-cdeval,
    title = "{CDE}val: A Benchmark for Measuring the Cultural Dimensions of Large Language Models",
    author = "Wang, Yuhang  and et al.",
    booktitle = "Proceedings of the 2nd Workshop on Cross-Cultural Considerations in NLP",
    year = "2024",
    publisher = "Association for Computational Linguistics",
    url = "https://aclanthology.org/2024.c3nlp-1.1/",
    doi = "10.18653/v1/2024.c3nlp-1.1",
}

@misc{kim2025rethinkingtrainingdebiasingtexttoimage,
      title={Rethinking Training for De-biasing Text-to-Image Generation: Unlocking the Potential of Stable Diffusion}, 
      author={Eunji Kim and Siwon Kim and Minjun Park and Rahim Entezari and Sungroh Yoon},
      year={2025},
      url={https://arxiv.org/abs/2408.12692}, 
}

@inproceedings{luccioni2023stable,
title={Stable Bias: Evaluating Societal Representations in Diffusion Models},
author={Sasha Luccioni and Christopher Akiki and Margaret Mitchell and Yacine Jernite},
booktitle={Thirty-seventh Conference on Neural Information Processing Systems Datasets and Benchmarks Track},
year={2023},
url={https://openreview.net/forum?id=qVXYU3F017}
}

@misc{weng2025imagesspeaklouderwords,
      title={Images Speak Louder than Words: Understanding and Mitigating Bias in Vision-Language Model from a Causal Mediation Perspective}, 
      author={Zhaotian Weng and Zijun Gao and Jerone Andrews and Jieyu Zhao},
      year={2025},
      url={https://arxiv.org/abs/2407.02814}, 
}

@article{Vasilev_2024,
   title={CRAFT: Cultural Russian-Oriented Dataset Adaptation for Focused Text-to-Image Generation},
   url={http://dx.doi.org/10.1134/S1064562424602324},
   DOI={10.1134/s1064562424602324},
   journal={Doklady Mathematics},
   publisher={Pleiades Publishing Ltd},
   author={Vasilev, V. A. and et al.},
   year={2024},
}

@misc{pistilli2024civicsbuildingdatasetexamining,
      title={CIVICS: Building a Dataset for Examining Culturally-Informed Values in Large Language Models}, 
      author={Giada Pistilli and Alina Leidinger and Yacine Jernite and Atoosa Kasirzadeh and Alexandra Sasha Luccioni and Margaret Mitchell},
      year={2024},
      url={https://arxiv.org/abs/2405.13974}, 
}

@INPROCEEDINGS{shin2024,
  author={Shin, Jeongmin and Jang, Hyeryung},
  booktitle={2024 15th International Conference on Information and Communication Technology Convergence (ICTC)}, 
  title={Data Augmentation Techniques Using Text-to-Image Diffusion Models for Enhanced Data Diversity}, 
  year={2024},
  doi={10.1109/ICTC62082.2024.10827311}}

@article{anonymous2024diverse,
title={Diverse Diffusion: Enhancing Image Diversity in Text-to-Image Generation},
author={Anonymous},
journal={Submitted to Transactions on Machine Learning Research},
year={2024},
url={https://openreview.net/forum?id=F0J1N6M5N9},
}

@inproceedings{wang-etal-2023-t2iat,
    title = "{T}2{IAT}: Measuring Valence and Stereotypical Biases in Text-to-Image Generation",
    author = "Wang, Jialu  and et al.",
    booktitle = "Findings of the Association for Computational Linguistics: ACL 2023",
    year = "2023",
    publisher = "Association for Computational Linguistics",
    url = "https://aclanthology.org/2023.findings-acl.160/",
    doi = "10.18653/v1/2023.findings-acl.160",
}

@misc{esposito2023mitigatingstereotypicalbiasestext,
      title={Mitigating stereotypical biases in text to image generative systems}, 
      author={Piero Esposito and Parmida Atighehchian and Anastasis Germanidis and Deepti Ghadiyaram},
      year={2023},
      url={https://arxiv.org/abs/2310.06904}, 
}

@misc{dai202415mmultimodalfacialimagetext,
      title={15M Multimodal Facial Image-Text Dataset}, 
      author={Dawei Dai and YuTang Li and YingGe Liu and Mingming Jia and Zhang YuanHui and Guoyin Wang},
      year={2024},
      url={https://arxiv.org/abs/2407.08515}, 
}

@misc{li2024fair,
title={Fair Text-to-Image Diffusion via Fair Mapping},
author={Jia Li and Lijie Hu and Jingfeng Zhang and Tianhang Zheng and Hua Zhang and Di Wang},
year={2024},
url={https://openreview.net/forum?id=vDJ4tzczlG}
}

@INPROCEEDINGS{prerak2024,
  author={Prerak, Shah},
  booktitle={2024 Third International Conference on Smart Technologies and Systems for Next Generation Computing (ICSTSN)}, 
  title={Addressing Bias in Text-to-Image Generation: A Review of Mitigation Methods}, 
  year={2024},
  doi={10.1109/ICSTSN61422.2024.10671230}
}

@misc{wan2024surveybiastexttoimagegeneration,
      title={Survey of Bias In Text-to-Image Generation: Definition, Evaluation, and Mitigation}, 
      author={Yixin Wan and et al.},
      year={2024},
      url={https://arxiv.org/abs/2404.01030}, 
}

@misc{vice2025exploringbias100texttoimage,
      title={Exploring Bias in over 100 Text-to-Image Generative Models}, 
      author={Jordan Vice and Naveed Akhtar and Richard Hartley and Ajmal Mian},
      year={2025},
      url={https://arxiv.org/abs/2503.08012}, 
}

@misc{cui2025surveyautomaticpromptoptimization,
      title={A Survey of Automatic Prompt Optimization with Instruction-focused Heuristic-based Search Algorithm}, 
      author={Wendi Cui and Zhuohang Li and Hao Sun and Damien Lopez and Kamalika Das and Bradley A. Malin and Sricharan Kumar and Jiaxin Zhang},
      year={2025},
      url={https://arxiv.org/abs/2502.18746}, 
}

@InProceedings{Miao_2024_CVPR,
    author    = {Miao, Zichen and Wang, Jiang and Wang, Ze and Yang, Zhengyuan and Wang, Lijuan and Qiu, Qiang and Liu, Zicheng},
    title     = {Training Diffusion Models Towards Diverse Image Generation with Reinforcement Learning},
    booktitle = {Proceedings of the IEEE/CVF Conference on Computer Vision and Pattern Recognition (CVPR)},
    year      = {2024},
}

@inproceedings{shi-etal-2024-culturebank,
    title = "{C}ulture{B}ank: An Online Community-Driven Knowledge Base Towards Culturally Aware Language Technologies",
    author = "Shi, Weiyan  and et al.",
    booktitle = "Findings of the Association for Computational Linguistics: EMNLP 2024",
    year = "2024",
    publisher = "Association for Computational Linguistics",
    url = "https://aclanthology.org/2024.findings-emnlp.288/",
    doi = "10.18653/v1/2024.findings-emnlp.288",
}

@misc{jeong2025culturetripculturallyawaretexttoimagegeneration,
      title={Culture-TRIP: Culturally-Aware Text-to-Image Generation with Iterative Prompt Refinement}, 
      author={Suchae Jeong and Inseong Choi and Youngsik Yun and Jihie Kim},
      year={2025},
      url={https://arxiv.org/abs/2502.16902}, 
}

@misc{liu2025culturevlmcharacterizingimprovingcultural,
      title={CultureVLM: Characterizing and Improving Cultural Understanding of Vision-Language Models for over 100 Countries}, 
      author={Shudong Liu and Yiqiao Jin and Cheng Li and Derek F. Wong and Qingsong Wen and Lichao Sun and Haipeng Chen and Xing Xie and Jindong Wang},
      year={2025},
      url={https://arxiv.org/abs/2501.01282}, 
}

@misc{su2024dragindynamicretrievalaugmented,
      title={DRAGIN: Dynamic Retrieval Augmented Generation based on the Information Needs of Large Language Models}, 
      author={Weihang Su and Yichen Tang and Qingyao Ai and Zhijing Wu and Yiqun Liu},
      year={2024},
      url={https://arxiv.org/abs/2403.10081}, 
}

@misc{liu2024scoftselfcontrastivefinetuningequitable,
      title={SCoFT: Self-Contrastive Fine-Tuning for Equitable Image Generation}, 
      author={Zhixuan Liu and Peter Schaldenbrand and Beverley-Claire Okogwu and Wenxuan Peng and Youngsik Yun and Andrew Hundt and Jihie Kim and Jean Oh},
      year={2024},
      url={https://arxiv.org/abs/2401.08053}, 
}

@misc{masrourisaadat2024analyzingqualitybiasperformance,
      title={Analyzing Quality, Bias, and Performance in Text-to-Image Generative Models}, 
      author={Nila Masrourisaadat and Nazanin Sedaghatkish and Fatemeh Sarshartehrani and Edward A. Fox},
      year={2024},
      url={https://arxiv.org/abs/2407.00138}, 
}

@misc{bang2024measuringpoliticalbiaslarge,
      title={Measuring Political Bias in Large Language Models: What Is Said and How It Is Said}, 
      author={Yejin Bang and Delong Chen and Nayeon Lee and Pascale Fung},
      year={2024},
      url={https://arxiv.org/abs/2403.18932}, 
}

@article{gallegos2024,
    author = {Gallegos, Isabel O. and et al.},
    title = {Bias and Fairness in Large Language Models: A Survey},
    journal = {Computational Linguistics},
    year = {2024},
    doi = {10.1162/coli_a_00524},
    url = {https://doi.org/10.1162/coli\_a\_00524},
    eprint = {https://direct.mit.edu/coli/article-pdf/50/3/1097/2471010/coli\_a\_00524.pdf},
}

@misc{fan2025biasguardreasoningenhancedbiasdetection,
      title={BiasGuard: A Reasoning-enhanced Bias Detection Tool For Large Language Models}, 
      author={Zhiting Fan and Ruizhe Chen and Zuozhu Liu},
      year={2025},
      url={https://arxiv.org/abs/2504.21299}, 
}

@misc{lin2024investigatingbiasllmbasedbias,
      title={Investigating Bias in LLM-Based Bias Detection: Disparities between LLMs and Human Perception}, 
      author={Luyang Lin and Lingzhi Wang and Jinsong Guo and Kam-Fai Wong},
      year={2024},
      url={https://arxiv.org/abs/2403.14896}, 
}

@inproceedings{jiang2024mitigating,
title={Mitigating Social Biases in Text-to-Image Diffusion Models via Linguistic-Aligned Attention Guidance},
author={Yue Jiang and Yueming Lyu and Ziwen He and Bo Peng and Jing Dong},
booktitle={ACM Multimedia 2024},
year={2024},
url={https://openreview.net/forum?id=rtjZHEOcHx}
}

@inproceedings{naous-etal-2024-beer,
    title = "Having Beer after Prayer? Measuring Cultural Bias in Large Language Models",
    author = "Naous, Tarek  and et al.",
    booktitle = "Proceedings of the 62nd Annual Meeting of the Association for Computational Linguistics (Volume 1: Long Papers)",
    year = "2024",
    publisher = "Association for Computational Linguistics",
    url = "https://aclanthology.org/2024.acl-long.862/",
    doi = "10.18653/v1/2024.acl-long.862",
}

@InProceedings{dinca2024openbias,
  author    = {D'Inc\`{a}, Moreno and Peruzzo, Elia and Mancini, Massimiliano
               and Xu, Dejia and Goel, Vidit and Xu, Xingqian and Wang, Zhangyang
               and Shi, Humphrey and Sebe, Nicu},
  title     = {{OpenBias}: Open-set Bias Detection in Text-to-Image Generative Models},
  booktitle = {Proceedings of the IEEE/CVF Conference on Computer Vision and
               Pattern Recognition (CVPR)},
  month     = {June},
  year      = {2024},
  pages     = {12225--12235},
}

@inproceedings{lee2024heim,
  title     = {Holistic Evaluation of Text-to-Image Models},
  author    = {Lee, Tony and Yasunaga, Michihiro and Meng, Chenlin and Mai, Yifan
               and Park, Joon Sung and Gupta, Agrim and Zhang, Yunzhi and
               Narayanan, Deepak and Teufel, Hannah and Bellagente, Marco and others},
  booktitle = {Advances in Neural Information Processing Systems},
  volume    = {36},
  year      = {2024},
  url       = {https://arxiv.org/abs/2311.04287},
}

@InProceedings{zhang2023itigeniccv,
  author    = {Zhang, Cheng and Chen, Xuanbai and Chai, Siqi and Wu, Chen Henry
               and Lagun, Dmitry and Beeler, Thabo and De la Torre, Fernando},
  title     = {{ITI-GEN}: Inclusive Text-to-Image Generation},
  booktitle = {Proceedings of the IEEE/CVF International Conference on
               Computer Vision (ICCV)},
  month     = {October},
  year      = {2023},
  pages     = {3969--3980},
}

@inproceedings{bonna2025debiaspi,
  title     = {{DebiasPI}: Inference-Time Debiasing by Prompt Iteration of a
               Text-to-Image Generative Model},
  author    = {Bonna, Sarah and Huang, Yu-Cheng and Novozhilova, Ekaterina
               and Paik, Sejin and Shan, Zhengyang and Feng, Michelle Yilin
               and Gao, Ge and Tayal, Yonish and Kulkarni, Rushil and Yu, Jialin
               and Divekar, Nupur and Ghadiyaram, Deepti and Wijaya, Derry
               and Betke, Margrit},
  booktitle = {Computer Vision -- ECCV 2024 Workshops},
  series    = {Lecture Notes in Computer Science},
  volume    = {15643},
  pages     = {68--83},
  publisher = {Springer},
  year      = {2025},
  doi       = {10.1007/978-3-031-92648-8_5},
}

@article{friedrich2024fairdiffusion,
  title     = {Auditing and Instructing Text-to-Image Generation Models on Fairness},
  author    = {Friedrich, Felix and Brack, Manuel and Struppek, Lukas and
               Hintersdorf, Dominik and Schramowski, Patrick and Luccioni, Sasha
               and Kersting, Kristian},
  journal   = {AI and Ethics},
  volume    = {5},
  number    = {3},
  pages     = {2103--2123},
  year      = {2025},
  doi       = {10.1007/s43681-024-00531-5},
}
\bibliographystyle{icml2026}

\newpage
\appendix
\onecolumn


\renewcommand{\thesection}{\Alph{section}}
\renewcommand{\thetable}{\Alph{section}\arabic{table}}
\renewcommand{\thefigure}{\Alph{section}\arabic{figure}}

\title{KG-FairDiff: Knowledge Graph-Guided Prompt Refinement for Demographically Fair Text-to-Image Generation}

\setcounter{section}{0}
\setcounter{table}{0}
\setcounter{figure}{0}


\section*{Appendix Overview}

This appendix consolidates the theoretical foundations, metric definitions, and extended experimental analyses that support the main results of the paper. Each section expands on concepts introduced in the main text, providing formal guarantees, mathematical relationships, and empirical validation of the proposed framework.

\paragraph{Theoretical Derivations (Section~\ref{app:theory}).}
This section presents the formal guarantees underlying the iterative refinement process and fairness metrics. The \textit{ENS--KL Relationship} (Section~\ref{app:ens_kl}) proves the equivalence between maximizing entropy-based diversity (ENS) and minimizing KL divergence to a uniform distribution, formally linking diversity objectives to information-theoretic principles. The \textit{Bias-P and Bias-W} definitions (Section~\ref{app:bias_metrics}) introduce per-prompt and dataset-level bias metrics and prove that dataset-level bias is upper-bounded by the average per-prompt bias, providing a theoretical consistency guarantee. Finally, \textit{Cross-Modal Alignment via Wasserstein Distance} (Section~\ref{app:wasserstein}) formulates semantic alignment between prompts and generated images using optimal transport, capturing geometry-aware similarity beyond cosine-based measures.

\paragraph{Evaluation Metric Definitions (Section~\ref{app:metrics}).}
This section provides complete formal definitions of all evaluation metrics used in the experiments. The \textit{Intra-Class Attribute Diversity (ICAD)} metric (Section~\ref{app:icad}) defines a centroid-based measure of intra-prompt visual diversity, quantifying variation among generated images for the same prompt. The definitions of \textit{Bias-P and Bias-W} (Section~\ref{app:biasp_biasw_def}) are revisited for completeness, with reference to their theoretical relationship. The \textit{ENS and KL Divergence} formulation (Section~\ref{app:ens}) reiterates their equivalence and explains how KL divergence to a uniform distribution serves as a principled measure of demographic balance.

\paragraph{CLIP Directional Similarity (Section~\ref{app:clip_dir}).}
This section introduces a metric for evaluating whether prompt refinements induce the intended semantic shift in generated images. The formal definition (Section~\ref{app:clip_dir_def}) expresses directional similarity as the cosine alignment between text-space and image-space transformations. Its interpretation (Section~\ref{app:clip_dir_interp}) clarifies how scores reflect alignment quality, ranging from agreement to protesting shifts. Implementation details (Section~\ref{app:clip_dir_impl}) describe the use of a high-capacity vision--language model for embedding extraction. Experimental results (Section~\ref{app:clip_dir_results}) report consistent positive alignment across professions while highlighting variability depending on prompt specificity. The conclusion (Section~\ref{sec:conclusion}) summarizes that this metric provides quantitative evidence of effective prompt steering.

\paragraph{Optimal-Transport-Based Multimodal Alignment (Section~\ref{app:ot_alignment}).}
This section presents a comprehensive framework for measuring alignment between text and image modalities using optimal transport theory. The feature extraction and normalization process (Section~\ref{app:ot_alignment}) ensures stable representations across modalities. The optimal transport formulation defines how probability mass is matched between image patches and text tokens. The \textit{Wasserstein Distance} captures direct semantic correspondence, while the \textit{Gromov--Wasserstein Distance} measures structural consistency through intra-modal relationships. The \textit{Fused Gromov--Wasserstein Distance} combines both perspectives into a unified objective. The iteration-wise analysis demonstrates that while direct semantic alignment may fluctuate, structural alignment improves consistently, with later iterations achieving the best overall balance.

\paragraph{Qualitative Comparison (Section~\ref{app:qualitive_comparison}).}
This section complements the quantitative analysis with visual examples comparing outputs before and after prompt enhancement. These results provide intuitive evidence of improved diversity and representation, with additional examples available via the external link provided in the appendix.

\section{Theoretical Derivations}
\label{app:theory}

This appendix provides the full proofs and formal derivations summarised in
Section~3.4 of the main paper.


\subsection{ENS--KL Relationship}
\label{app:ens_kl}

The following theorem, drawn from~\cite{dominguez2024}, establishes the equivalence between maximising diversity (ENS) and minimising KL divergence to the uniform distribution.

\begin{theorem}
Let $p(a)$ be the empirical distribution over categories $a\in\mathcal{A}_G$, $u(a)=1/n_G$ the uniform distribution, and $H(p)=-\sum_a p(a)\ln p(a)$ the Shannon entropy. Then:
\begin{equation}
  D_{\mathrm{KL}}(p\|u) = \ln n_G - H(p).
  \label{eq:kl_entropy}
\end{equation}
Consequently, defining $\mathrm{ENS}=\exp(H(p))$:
\begin{equation}
  \mathrm{ENS} = n_G\exp\!\left(-D_{\mathrm{KL}}(p\|u)\right).
\end{equation}
\end{theorem}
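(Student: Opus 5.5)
The plan is to expand the definition of KL divergence directly and use only that $u$ is constant. Write
\[
D_{\mathrm{KL}}(p\|u)=\sum_{a\in\mathcal{A}_G} p(a)\ln\frac{p(a)}{u(a)},
\]
with the standard convention $0\ln 0 = 0$. This convention handles categories the empirical distribution never hits, so terms with $p(a)=0$ vanish. Since $u(a)=1/n_G>0$ for every $a$, the divergence is always finite; no absolute-continuity issue arises, so there is no case split beyond this convention.

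Next, split the logarithm as $\ln p(a) - \ln u(a) = \ln p(a) + \ln n_G$. Summing the first piece against $p$ gives $\sum_a p(a)\ln p(a) = -H(p)$. Summing the second gives $\ln n_G\sum_a p(a)=\ln n_G$, because $p$ is a probability distribution on $\mathcal{A}_G$; this uses $\sum_a p(a)=1$, which holds for an empirical distribution. Together these yield \eqref{eq:kl_entropy}.

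For the second claim, rearrange to $H(p)=\ln n_G - D_{\mathrm{KL}}(p\|u)$ and exponentiate:
\[
\mathrm{ENS}=e^{H(p)}=e^{\ln n_G}\,e^{-D_{\mathrm{KL}}(p\|u)}=n_G\exp(-D_{\mathrm{KL}}(p\|u)).
\]
The map $x\mapsto n_G e^{-x}$ is strictly decreasing, so minimising KL to uniform is equivalent to maximising ENS, as claimed in the main text. One may additionally note from Gibbs' inequality that $D_{\mathrm{KL}}\ge 0$, giving $1\le \mathrm{ENS}\le n_G$ with equality at $p=u$.

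There is no real obstacle. The only point needing care is the zero-probability convention, together with using the same natural logarithm in both $H$ and $D_{\mathrm{KL}}$, so that the constant is $\ln n_G$ and not $\log_2 n_G$. If ENS were defined with a different base, the identity would only hold after matching bases.
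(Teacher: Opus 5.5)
Your proof is correct and is the same as the paper's: expand $D_{\mathrm{KL}}(p\|u)=\sum_a p(a)\ln(n_G\,p(a))$, pull out $\ln n_G$ using $\sum_a p(a)=1$, recognise the remaining sum as $-H(p)$, and exponentiate. One small slip in your closing aside: Gibbs' inequality gives only the upper bound $\mathrm{ENS}\le n_G$, while the lower bound $\mathrm{ENS}\ge 1$ comes from $H(p)\ge 0$ (equivalently $D_{\mathrm{KL}}(p\|u)\le\ln n_G$); this remark lies outside the statement being proved.
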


\begin{proof}
By definition of KL divergence:
\begin{align}
  D_{\mathrm{KL}}(p\|u)
  &= \sum_a p(a)\ln\frac{p(a)}{u(a)}
   = \sum_a p(a)\ln\bigl(n_G \cdot p(a)\bigr) \notag\\
  &= \ln n_G + \sum_a p(a)\ln p(a)
   = \ln n_G - H(p).
\end{align}
Exponentiating both sides gives $\exp(H(p)) = n_G \exp(-D_{\mathrm{KL}}(p\|u))$, which is the stated result.
\end{proof}

Thus minimising KL divergence to uniform is equivalent to maximising entropy/ENS. ENS ranges from~1 (all outputs collapse to one category) to~$n_G$ (perfectly uniform representation).


\subsection{Bias-P and Bias-W: Formal Definitions and Relationship} \label{app:bias_metrics}

The Bias-P and Bias-W metrics were introduced by Jiang et al.~\cite{jiang2024mitigating} to quantify demographic skew at the per-prompt
and dataset levels, respectively.

Let $\hat{p}_i(a)$ be the empirical attribute distribution for prompt~$i$ and $\bar{p}(a)=\frac{1}{|\mathcal{P}|}\sum_i\hat{p}_i(a)$ the dataset-level distribution:
\begin{align}
  \mathrm{Bias\text{-}P} &= \frac{1}{|\mathcal{P}|}\sum_i
    \left(\frac{1}{n_G}\sum_{a\in\mathcal{A}_G}
    \left(\hat{p}_i(a)-\frac{1}{n_G}\right)^2\right)^{1/2}, \label{eq:biasp}\\[4pt]
  \mathrm{Bias\text{-}W} &= \left(\frac{1}{n_G}\sum_{a\in\mathcal{A}_G}
    \left(\bar{p}(a)-\frac{1}{n_G}\right)^2\right)^{1/2}. \label{eq:biasw}
\end{align}

\begin{proposition}
$\mathrm{Bias\text{-}W} \leq \mathrm{Bias\text{-}P}$.
\end{proposition}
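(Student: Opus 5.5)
The plan is to read both quantities as a single norm applied to vectors and then use the triangle inequality, i.e.\ convexity of the norm. Fix an attribute classifier with $n_G$ categories. For each prompt $i$, define the deviation vector $v_i\in\mathbb{R}^{n_G}$ with components $v_i(a)=\hat{p}_i(a)-1/n_G$. On $\mathbb{R}^{n_G}$, define the scaled Euclidean norm $\|v\|_\ast=\bigl(\frac{1}{n_G}\sum_a v(a)^2\bigr)^{1/2}=n_G^{-1/2}\|v\|_2$. This is a genuine norm, being a positive multiple of $\|\cdot\|_2$.

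With this notation, \eqref{eq:biasp} reads $\mathrm{Bias\text{-}P}=\frac{1}{|\mathcal{P}|}\sum_i\|v_i\|_\ast$. For Bias-W, the key observation is that the target $1/n_G$ does not depend on $i$. Hence
\[
\bar{p}(a)-\frac{1}{n_G}=\frac{1}{|\mathcal{P}|}\sum_i\Bigl(\hat{p}_i(a)-\frac{1}{n_G}\Bigr)=\frac{1}{|\mathcal{P}|}\sum_i v_i(a),
\]
so \eqref{eq:biasw} becomes $\mathrm{Bias\text{-}W}=\bigl\|\frac{1}{|\mathcal{P}|}\sum_i v_i\bigr\|_\ast$.

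The claim now follows from the triangle inequality together with absolute homogeneity of $\|\cdot\|_\ast$:
\[
\Bigl\|\frac{1}{|\mathcal{P}|}\sum_i v_i\Bigr\|_\ast\le\frac{1}{|\mathcal{P}|}\sum_i\|v_i\|_\ast .
\]
Equivalently, this is Jensen's inequality for the convex function $\|\cdot\|_\ast$ under the uniform average over prompts.

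No step here is difficult. The one point needing care is the middle step, where the average of deviations must equal the deviation of the average. This holds only because every prompt uses the same uniform target $1/n_G$ and the same category set $\mathcal{A}_G$, and because $\bar{p}$ is an unweighted average over prompts. If the prompts were weighted differently (for example, by the number of detected faces per prompt), the argument would still go through, provided Bias-P uses the same weights. I would note this caveat explicitly.

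I would also record when equality holds: exactly when all nonzero $v_i$ are nonnegative multiples of one another. In words, every prompt is skewed in the same direction. This explains why the observed Bias-W values sit below the Bias-P values in the tables.
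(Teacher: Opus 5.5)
Your proof is correct and is essentially the paper's own argument: the paper also sets $f_i=\hat{p}_i-u$, notes that $\bar{p}-u=\frac{1}{|\mathcal{P}|}\sum_i f_i$, applies the triangle inequality for $\|\cdot\|_2$, and then rescales by $1/\sqrt{n_G}$; your $\|\cdot\|_\ast$ just builds that factor into the norm from the start. One caution on your closing aside: the tables do not uniformly respect the bound (Gemini~3~Pro, Gender, KG-FairDiff reports Bias-P $0.01$ and Bias-W $0.02$, which rounding alone cannot produce), so the reported numbers were apparently not computed with the unweighted definitions --- exactly the weighting mismatch you warned about.
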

\begin{proof}
Define $f_i = (\hat{p}_i - u)$ where $u$ is the uniform vector of length
$n_G$, and note that $\bar{p} - u = \frac{1}{|\mathcal{P}|}\sum_i f_i$.
By the triangle inequality for the $\ell_2$ norm:
\begin{equation}
  \left\|\frac{1}{|\mathcal{P}|}\sum_i f_i\right\|_2
  \;\leq\;
  \frac{1}{|\mathcal{P}|}\sum_i \|f_i\|_2.
\end{equation}
Dividing both sides by $\sqrt{n_G}$ yields
$\mathrm{Bias\text{-}W} \leq \mathrm{Bias\text{-}P}$.
\end{proof}

Hence dataset-level bias cannot exceed average per-prompt bias, making Bias-W a lower bound on Bias-P across any prompt set.

\subsection{Cross-Modal Alignment via Wasserstein Distance} \label{app:wasserstein}

We measure semantic fidelity between refined prompts and their generated images via the squared 2-Wasserstein distance, defined as follows.

Let $X=\{x_i\}_{i=1}^n$ be CLIP ViT-L/14 patch-token embeddings of the generated image with uniform weights $\mu_i = 1/n$, and $Y=\{y_j\}_{j=1}^m$ the token embeddings of the refined prompt with importance weights $\nu_j$ derived from softmax-normalised attention scores of the final CLIP text encoder layer, satisfying $\sum_i\mu_i=\sum_j\nu_j=1$. The squared 2-Wasserstein distance is:
\begin{equation}
  W_2^2(X,Y)
  = \min_{\pi\in\Pi(\mu,\nu)}
    \sum_{i=1}^n\sum_{j=1}^m \pi_{ij}\,\|x_i-y_j\|_2^2,
\end{equation}
where the feasible transport set is
\begin{equation}
  \Pi(\mu,\nu)
  = \left\{\pi\in\mathbb{R}_+^{n\times m}
    \;\middle|\;
    \textstyle\sum_j\pi_{ij}=\mu_i,\;
    \sum_i\pi_{ij}=\nu_j\right\}.
\end{equation}
Smaller $W_2^2$ indicates stronger semantic alignment between the refined prompt and the generated image, providing a geometry-aware complement to cosine-based CLIP scores. We report mean $\pm$ std of $W_2^2$ across all prompt--image pairs.

\section{Evaluation Metric Definitions} \label{app:metrics}

For completeness, we provide the full formal definitions of all evaluation metrics used in the experimental evaluation.

\subsection{Intra-Class Attribute Diversity (ICAD)} \label{app:icad}

ICAD~\cite{li2024fair} measures the intra-prompt visual diversity of a set of generated images. For a prompt $c$, let $S_c$ denote the set of CLIP ViT-L/14 image embeddings produced for that prompt. The per-prompt ICAD score is the mean $\ell_2$ distance of each image embedding from the prompt-level centroid:
\begin{equation}
  \mathrm{ICAD}(c)
  = \frac{1}{|S_c|}\sum_{e_k\in S_c}
    \left\|e_k - \bar{e}_c\right\|_2,
  \quad \bar{e}_c = \frac{1}{|S_c|}\sum_{e_i\in S_c}e_i.
\end{equation}
The dataset-level ICAD is averaged over all prompts:
\begin{equation}
  \overline{\mathrm{ICAD}}
  = \frac{1}{|D|}\sum_{c\in D}\mathrm{ICAD}(c),
\end{equation}
where $D$ is the full prompt set. Higher $\overline{\mathrm{ICAD}}$ indicates greater intra-prompt visual diversity; the centroid $\bar{e}_c$ is computed over the entire $S_c$ before the norm is taken, so the formula yields zero only when all images are identical.

\subsection{Bias-P and Bias-W} \label{app:biasp_biasw_def}

See Appendix~\ref{app:bias_metrics} for full formal definitions and the proof that $\mathrm{Bias\text{-}W}\leq\mathrm{Bias\text{-}P}$.

\subsection{Effective Number of Species (ENS) and KL Divergence} \label{app:ens}

See Appendix~\ref{app:ens_kl} for the formal definitions and the proof of the ENS--KL equivalence. KL divergence to the uniform reference is computed as:
\begin{equation}
  D_{\mathrm{KL}}(p \| u) = \sum_{a \in \mathcal{A}_G} p(a) \ln\!\left(n_G \cdot p(a)\right),
\end{equation}
where $p(a)$ is the empirical attribute frequency and $u(a) = 1/n_G$. Lower KL divergence indicates better demographic balance relative to a uniform target.

\section{CLIP Directional Similarity} \label{app:clip_dir}

\subsection{Definition} \label{app:clip_dir_def}

CLIP Directional Similarity repurposes the joint vision--language embedding space of CLIP to measure directional alignment between a text-space editing direction and the corresponding image-space response.

Given a base prompt $T^{-}$ and an enhanced prompt $T^{+}$, and $N$ matched image pairs generated from each, the score is:
\begin{equation}
  S_{\mathrm{dir}}
  = \frac{1}{N}\sum_{i=1}^{N}
    \cos\!\left(
      \underbrace{\phi_I(I^{+}_i) - \phi_I(I^{-}_i)}_{\Delta I_i},\;
      \underbrace{\phi_T(T^{+}) - \phi_T(T^{-})}_{\Delta T}
    \right),
  \label{eq:clip_dir}
\end{equation}
where $\phi_T(\cdot)$ and $\phi_I(\cdot)$ denote the $\ell_2$-normalised text and image encodings, and $\cos(\cdot,\cdot)$ is the cosine similarity. Both difference vectors are therefore unit-normalised before the dot product is taken.

\subsection{Interpretation} \label{app:clip_dir_interp}

\begin{center}
\begin{tabular}{cl}
\toprule
\textbf{Score} & \textbf{Meaning} \\
\midrule
$+1$ & Visual shift perfectly mirrors the textual shift \\
$0$  & Orthogonal --- no measurable visual correlation \\
$-1$ & Visual shift opposes the textual shift \\
\bottomrule
\end{tabular}
\end{center}

\noindent
A positive score confirms that the enhanced prompt successfully pushed the generated images in a direction that is semantically consistent with the additional diversity content injected into the text. A score near zero would indicate that the model ignored the new textual cues.

\subsection{Implementation} \label{app:clip_dir_impl}

The evaluation uses SigLIP~2 Giant (\texttt{google/siglip2-giant-opt-patch16-384}), a ViT-Giant/16 model trained with the SigLIP~2 objective (2\,B parameters, 1152-dimensional embeddings, \texttt{float16} precision). SigLIP~2 provides stronger image--text alignment than standard CLIP-ViT-H/14 on diversity-heavy prompts, making it better suited for this evaluation task.

\subsection{Experimental Results} \label{app:clip_dir_results}

\paragraph{Setup.}
\begin{itemize}
  \item Input prompts: 4 base prompts (Doctors, Nurses, Engineers, Scientists).
  \item Image pairs per row: $N = 100$, generated with matched seeds.
  \item Generation model: Qwen-Image-2512.
  \item Embedding model: SigLIP~2 Giant.
\end{itemize}

\paragraph{Per-Row Results.} Table~\ref{tab:clip_dir} summarises the directional similarity scores. The \textit{Min} and \textit{Max} columns report the single-pair extremes across all 100 seeds, giving a sense of the score distribution, while \textit{Avg} is the primary metric (Equation~\ref{eq:clip_dir}).

\begin{table}[t]
\centering
\caption{\textbf{CLIP Directional Similarity --- per-profession summary
  ($N = 100$ pairs each).}}
\label{tab:clip_dir}
\begin{tabular}{lrrrrr}
\toprule
\textbf{Profession} & \textbf{Pairs} & \textbf{Min} & \textbf{Max}
  & \textbf{Avg} & \textbf{Std} \\
\midrule
Doctors    & 100 & $+0.029$ & $+0.105$ & $+0.0630$ & $\pm 0.017$ \\
Nurses     & 100 & $-0.046$ & $+0.125$ & $+0.0462$ & $\pm 0.031$ \\
Engineers  & 100 & $+0.048$ & $+0.164$ & $+0.0994$ & $\pm 0.021$ \\
Scientists & 100 & $+0.008$ & $+0.135$ & $+0.0803$ & $\pm 0.025$ \\
\midrule
Overall    & 400 & $-0.046$ & $+0.164$ & $+0.0708$ & $\pm 0.028$ \\
\bottomrule
\end{tabular}
\end{table}

\paragraph{Discussion.}
All four professions yield a positive overall average ($S_{\mathrm{dir}} = +0.071$), confirming that the enhanced prompts consistently move the image generator in the intended direction.

Engineers ($+0.099$) and Scientists ($+0.080$) show the strongest alignment. The enhanced prompts for these professions introduce many concrete, visually grounded elements (cultural garments, tools, lab coats with motifs) that the diffusion model can latch on to.

Doctors ($+0.063$) and Nurses ($+0.046$) score lower. The Nurses prompt is the only row containing negative per-pair scores ($\min = -0.046$), suggesting that some image pairs exhibit a visual drift orthogonal or even contrary to the textual direction. This is likely because subtle narrative elements in the enhanced prompt (\textit{e.g.}\ wristbands, passport photos) are harder for the model to render consistently.

Across all rows the score distribution is right-skewed with a long positive tail, indicating that while most seed pairs produce a modest positive alignment signal, a subset of seeds yields notably strong alignment --- reaching up to $+0.164$ for Engineers.

\subsection{Conclusion} \label{sec:conclusion}
The overall directional similarity of $+0.071$ provides quantitative evidence that RAG-enhanced prompts succeed in steering text-to-image generation toward more diverse and representative outputs. The metric is sensitive enough to distinguish between professions and prompt styles, making it a practical tool for iterative prompt refinement. Future work should examine whether higher scores also translate to improved perceptual diversity as rated by human evaluators.

\section{Optimal-Transport-Based Multimodal Alignment} \label{app:ot_alignment}

\subsection{Feature Extraction and Anisotropy Correction}

Let \(I\) denote an input image and \(T\) denote the corresponding text prompt. A Vision Transformer (ViT) is used to extract image-patch features, while a Text Transformer (e.g., CLIP) is used to extract token-level text features.

\subsubsection{Image processing flow}

The image \(I\) is divided into a sequence of \(M\) patches. After passing through the Vision Transformer and discarding the global classification token, we obtain raw local patch representations
\[
\tilde{V} \in \mathbb{R}^{M \times d},
\]
where \(d\) is the embedding dimension. Let \(\tilde{v}_i \in \mathbb{R}^{d}\) denote the raw representation of patch \(i\). To mitigate representation anisotropy, feature-wise Z-score standardization is applied:
\begin{equation}
v_i =
\frac{\tilde{v}_i - \mu_{\tilde{v}_i}}
{\sigma_{\tilde{v}_i} + \epsilon},
\qquad
\forall i \in \{1,\dots,M\},
\label{eq:image_zscore}
\end{equation}
where \(\mu_{\tilde{v}_i}\) and \(\sigma_{\tilde{v}_i}\) are the mean and standard deviation across the embedding dimensions of patch \(i\), and \(\epsilon = 10^{-5}\) is a small constant for numerical stability. The final set of image patch embeddings is
\[
V = \{v_i\}_{i=1}^{M}.
\]

\paragraph{Explanation.}
Equation~\eqref{eq:image_zscore} normalizes each patch embedding by centering it around zero and scaling it by its standard deviation. This reduces anisotropy in the visual embedding space and produces more stable feature representations for transport-based comparison.

\subsubsection{Text processing flow}

Similarly, the text prompt \(T\) is tokenized into a sequence. After passing through the Text Transformer, the Start-of-Sequence (SOS) and End-of-Text (EOT) tokens are discarded, yielding \(N\) valid token embeddings
\[
\tilde{U} \in \mathbb{R}^{N \times d}.
\]
Let \(\tilde{u}_j \in \mathbb{R}^{d}\) denote the raw representation of token \(j\). Applying the same Z-score standardization gives
\begin{equation}
u_j =
\frac{\tilde{u}_j - \mu_{\tilde{u}_j}}
{\sigma_{\tilde{u}_j} + \epsilon},
\qquad
\forall j \in \{1,\dots,N\},
\label{eq:text_zscore}
\end{equation}
and the final set of text token embeddings is
\[
U = \{u_j\}_{j=1}^{N}.
\]

\paragraph{Explanation.}
Equation~\eqref{eq:text_zscore} applies the same normalization strategy to the textual embeddings. Using the same stabilization for both modalities makes the subsequent alignment process more comparable and robust.

\subsection{Optimal Transport Alignment}

To measure the alignment between the visual patches \(V\) and textual tokens \(U\), uniform marginal distributions are defined over the image patches and text tokens:
\begin{equation}
p_i = \frac{1}{M}, \qquad \forall i,
\qquad\qquad
q_j = \frac{1}{N}, \qquad \forall j.
\label{eq:marginals}
\end{equation}

Let \(\Pi(p,q)\) denote the transport polytope containing all valid joint probability couplings \(\pi \in \mathbb{R}_{+}^{M \times N}\) satisfying the marginal constraints:
\begin{equation}
\Pi(p,q)=
\left\{
\pi \in \mathbb{R}_{+}^{M \times N}
\;\middle|\;
\sum_{j=1}^{N}\pi_{i,j}=p_i\ \forall i,
\quad
\sum_{i=1}^{M}\pi_{i,j}=q_j\ \forall j
\right\}.
\label{eq:transport_polytope}
\end{equation}

\paragraph{Explanation.}
Equation~\eqref{eq:marginals} assigns equal mass to each image patch and each text token. Equation~\eqref{eq:transport_polytope} defines the feasible set of transport plans that preserve these marginals, ensuring that the transport solution distributes probability mass consistently across both modalities.

\subsection{Wasserstein Distance (WD)}

The Wasserstein Distance evaluates direct cross-domain semantic alignment between visual and textual features. The semantic cost matrix \(M \in \mathbb{R}^{M \times N}\) is defined using cosine distance:
\begin{equation}
M_{i,j}
=
1-
\frac{\langle v_i,u_j\rangle}
{\|v_i\|_2\,\|u_j\|_2}.
\label{eq:semantic_cost}
\end{equation}

The optimal transport plan minimizes the total semantic cost:
\begin{equation}
W(V,U)
=
\min_{\pi \in \Pi(p,q)}
\sum_{i=1}^{M}\sum_{j=1}^{N}\pi_{i,j}M_{i,j}.
\label{eq:wd}
\end{equation}

\paragraph{Explanation.}
Equation~\eqref{eq:semantic_cost} measures how dissimilar each image patch is from each text token in the shared embedding space. Equation~\eqref{eq:wd} then finds the transport plan that minimizes the total cross-modal semantic mismatch. Therefore, a lower WD indicates better direct semantic agreement between the generated image content and the text prompt.

\subsection{Gromov--Wasserstein Distance (GWD)}

Unlike WD, the Gromov--Wasserstein Distance measures structural isomorphism by comparing intra-domain geometric relationships rather than direct cross-domain features. The intra-image and intra-text cost matrices are defined using normalized squared Euclidean distances:
\begin{equation}
C^{V}_{i,i'}
=
\frac{\|v_i-v_{i'}\|_2^2}
{\max_{k,l}\|v_k-v_l\|_2^2},
\qquad
C^{U}_{j,j'}
=
\frac{\|u_j-u_{j'}\|_2^2}
{\max_{k,l}\|u_k-u_l\|_2^2}.
\label{eq:intra_cost}
\end{equation}

Using the squared loss
\begin{equation}
L(a,b)=(a-b)^2,
\label{eq:gw_loss}
\end{equation}
the Gromov--Wasserstein objective becomes
\begin{equation}
GW(V,U)
=
\min_{\pi \in \Pi(p,q)}
\sum_{i=1}^{M}\sum_{i'=1}^{M}\sum_{j=1}^{N}\sum_{j'=1}^{N}
L\!\left(C^{V}_{i,i'},C^{U}_{j,j'}\right)\pi_{i,j}\pi_{i',j'}.
\label{eq:gwd}
\end{equation}

\paragraph{Explanation.}
Equation~\eqref{eq:intra_cost} captures pairwise relations within the image modality and within the text modality. Equation~\eqref{eq:gwd} compares these two relational structures through the transport plan. Lower GWD means that the internal organization of image features is more consistent with the internal organization of textual features, even when direct feature matching is not perfect.

\subsection{Fused Gromov--Wasserstein Distance (FGWD)}

The Fused Gromov--Wasserstein Distance jointly considers direct semantic matching and structural similarity. A hyperparameter \(\alpha \in [0,1]\) controls the trade-off between the Wasserstein term and the Gromov--Wasserstein term:
\begin{equation}
FGW_{\alpha}(V,U)
=
\min_{\pi \in \Pi(p,q)}
\sum_{i=1}^{M}\sum_{j=1}^{N}
\pi_{i,j}
\left[
(1-\alpha)M_{i,j}
+
\alpha
\sum_{i'=1}^{M}\sum_{j'=1}^{N}
L\!\left(C^{V}_{i,i'},C^{U}_{j,j'}\right)\pi_{i',j'}
\right].
\label{eq:fgwd}
\end{equation}

\paragraph{Explanation.}
Equation~\eqref{eq:fgwd} combines the direct semantic mismatch term from WD with the structural consistency term from GWD. When \(\alpha \to 0\), the objective approaches standard WD and prioritizes direct feature matching. When \(\alpha \to 1\), it approaches pure GWD and prioritizes structural alignment. Thus, FGWD provides a holistic multimodal alignment score that reflects both semantic fidelity and relational coherence.

\subsection{Iteration-wise Quantitative Results}

Table~\ref{tab:ot_metrics} summarizes the Optimal Transport-based metrics across five generative iterations. Since these metrics represent transport costs or distances, lower values indicate better alignment between the generated images and the corresponding text prompts.

\begin{table}[htbp]
\centering
\caption{Iteration-wise Optimal Transport alignment metrics. Lower values indicate better multimodal alignment.}
\label{tab:ot_metrics}
\begin{tabular}{lcccc}
\toprule
Iteration & Image Count & WD & GWD & FGWD \\
\midrule
\texttt{iteration\_1} & 4 & 0.748961 & 0.023537 & 0.404259 \\
\texttt{iteration\_2} & 5 & 0.754772 & 0.018855 & 0.405986 \\
\texttt{iteration\_3} & 2 & 0.763011 & 0.016279 & 0.408596 \\
\texttt{iteration\_4} & 5 & \textbf{0.745371} & 0.017281 & 0.399026 \\
\texttt{iteration\_5} & 4 & 0.746353 & \textbf{0.014763} & \textbf{0.398786} \\
\bottomrule
\end{tabular}
\end{table}

\paragraph{Explanation of the quantitative trends.}
The standard Wasserstein Distance (WD), defined in Equation~\eqref{eq:wd}, reflects direct semantic feature matching between image patches and text tokens. Its evolution is non-linear: WD increases from \texttt{iteration\_1} to \texttt{iteration\_3}, reaching its largest value at \texttt{iteration\_3}, and then decreases sharply in \texttt{iteration\_4} and remains low in \texttt{iteration\_5}. This indicates that direct semantic matching becomes temporarily more difficult in the middle iterations, but improves again in the later stages.

The Gromov--Wasserstein Distance (GWD), defined in Equation~\eqref{eq:gwd}, measures the compatibility of internal structural relationships between the visual and textual modalities. In contrast to WD, GWD shows an overall decreasing trend from \texttt{iteration\_1} to \texttt{iteration\_5}, with only a slight increase at \texttt{iteration\_4}. This suggests that the internal relational structure of the generated images becomes progressively more consistent with the structure of the text prompts over successive iterations.

The Fused Gromov--Wasserstein Distance (FGWD), defined in Equation~\eqref{eq:fgwd}, balances both semantic correspondence and structural similarity. Its trend broadly follows the behavior of WD, peaking at \texttt{iteration\_3} and improving afterward. The minimum FGWD is obtained at \texttt{iteration\_5}, indicating the strongest overall multimodal alignment when both direct feature agreement and structural coherence are taken into account.

\paragraph{Overall interpretation.}
Taken together, these results show that the iterative generation process improves multimodal alignment over time. Although the best pure semantic score (WD) is achieved at \texttt{iteration\_4}, the best structural score (GWD) and the best overall fused score (FGWD) are obtained at \texttt{iteration\_5}. Therefore, \texttt{iteration\_5} can be interpreted as the most balanced and robust final iteration when both semantic fidelity and structural consistency are considered jointly.

\section{Qualitative comparison} \label{app:qualitive_comparison}
In this section, we present results from the generated images for a prompt using a before-and-after prompt-enhancement setup. More images are provided in this link \footnote{\url{https://drive.google.com/drive/folders/1bP4u2bjupUGT9KMm9X7gHPLx4p1LLBsX}}

\begin{figure*}[t]
\centering




\begin{subfigure}[t]{0.49\textwidth}
    \centering
    \includegraphics[width=\linewidth]{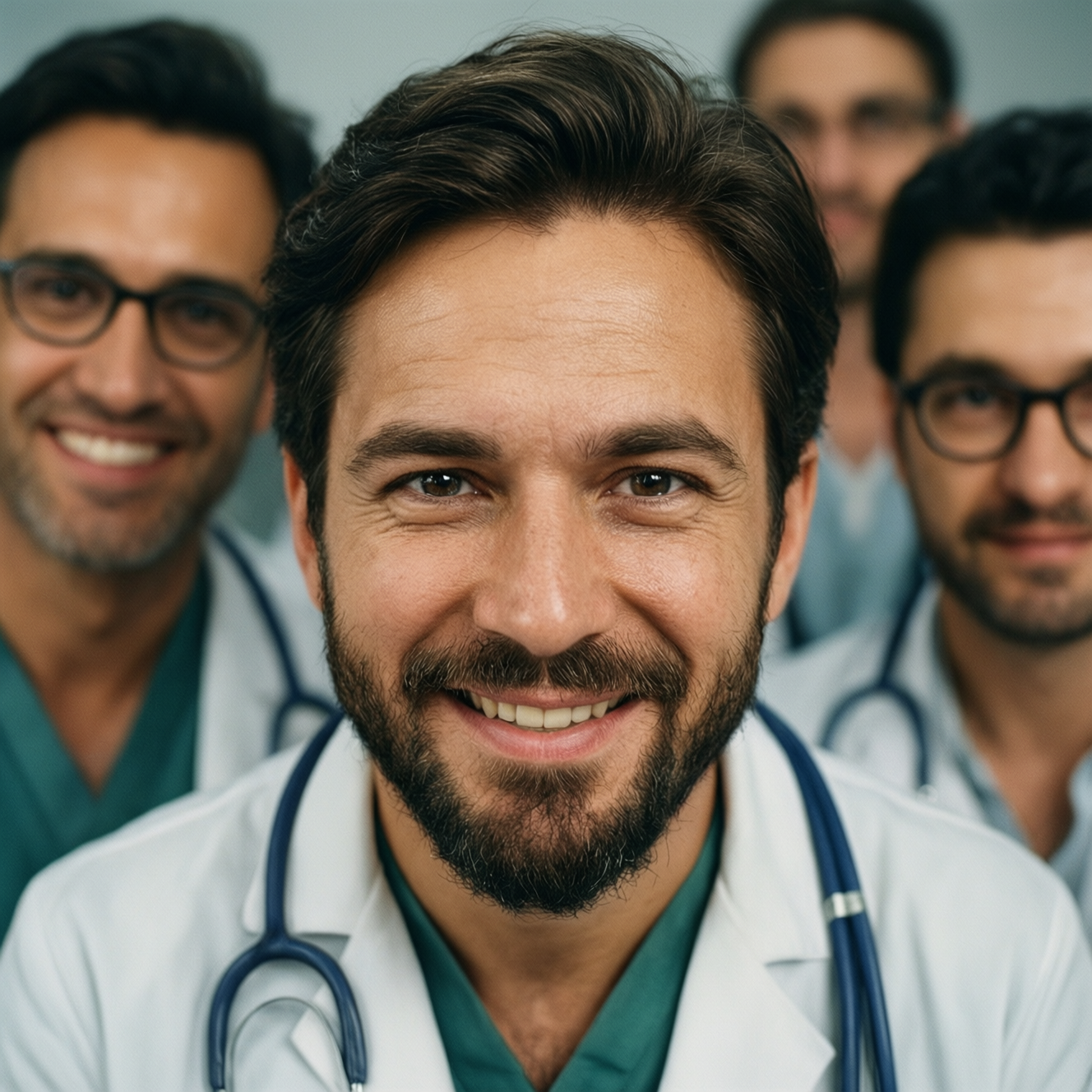}
\end{subfigure}
\hfill
\begin{subfigure}[t]{0.49\textwidth}
     \centering
     \includegraphics[width=\linewidth]{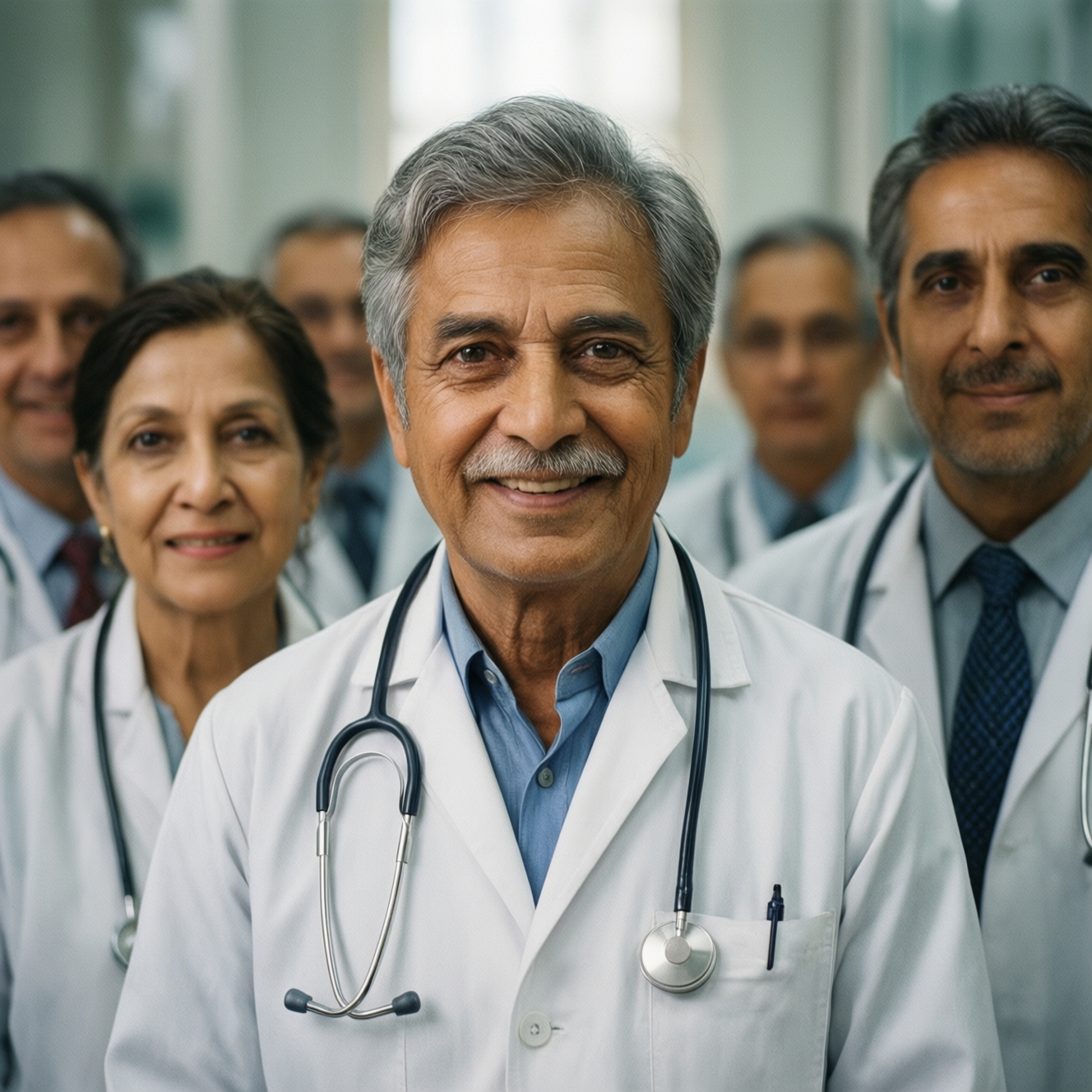}
\end{subfigure}

\caption{Qualitative comparison for Prompt Comparison 1. Left: an image generated with the generic prompt. Right: an image generated with the enhanced prompt.}
\label{fig:prompt_comparison_1}
\end{figure*}

\subsection{Prompt Comparison}
\noindent\textbf{Generic prompt.} Generate a close-up image of a group of Doctors who are smiling at my camera full face.

\medskip
\noindent\textbf{Enhanced prompt.} ["Generate a vibrant, close-up image showcasing the unity in diversity amongst a group of Doctors from various corners of the world. They should be arrayed in a way that celebrates not just professional camaraderie but a rich blend of cultural identities. Let's see youthful Indian interns buzzing with vigor, middle-aged Korean specialists attired in traditional Hanbok, expressing warmth with their radiant smiles, and sage Greek practitioners wearing the ceremonious attire associated with their graduation. Woven into this tapestry, capture also a dedicated Singaporean surgeon and a resilient American doctor who uses sign language, both effortlessly partaking in the group's jovial exchange. Include a charming visual dialogue that hints at shared interests, like some Doctors caught in a spirited debate about a popular North Korean talk show or others peering through vintage cameras, capturing the moment for posterity, a nod to their collective love for photography. Let the image represent various socioeconomic backgrounds and pay careful attention to individuals' expressions, authentically displaying their shared joy, commitment, and inherent strength. Each face, unique and expressive, counteracts stereotypes, their smiles not forced but natural, a testimony to their collective dedication and shared missionforced but natural, a testimony to their collective dedication and shared mission"]

\end{document}